\setlist{noitemsep}
\definecolor{firebrick}{rgb}{0.7, 0.13, 0.13}
\definecolor{darkblue}{rgb}{0,0,0.55}
\definecolor{grey}{rgb}{0.8,0.8,0.8}
\def\eg{\emph{e.g.,}\xspace}
\def\ie{\emph{i.e.,}\xspace}
\def\eqref#1{equation~\ref{#1}}
\def\1{\bm{1}}
\DeclareMathAlphabet{\mathsfit}{\encodingdefault}{\sfdefault}{m}{sl}
\SetMathAlphabet{\mathsfit}{bold}{\encodingdefault}{\sfdefault}{bx}{n}
\newcommand{\tens}[1]{\bm{\mathsfit{#1}}}
\def\tL{{\tens{L}}}
\def\gD{{\mathcal{D}}}
\def\gL{{\mathcal{L}}}
\def\gX{{\mathcal{X}}}
\def\gY{{\mathcal{Y}}}
\def\sD{{\mathbb{D}}}
\newcommand{\E}{\mathbb{E}}
\newcommand{\R}{\mathbb{R}}
\newcommand{\N}{\mathcal{N}}
\DeclareMathOperator*{\argmax}{arg\,max}
\DeclareMathOperator{\sign}{sign}
\def\eg{\emph{e.g.,}\xspace}
\def\ie{\emph{i.e.,}\xspace}
\begin{document}
%
\title{Provably Unlearnable Data Examples}

\author{
\IEEEauthorblockN{
Derui Wang\IEEEauthorrefmark{1}\IEEEauthorrefmark{2},
Minhui Xue\IEEEauthorrefmark{1}\IEEEauthorrefmark{2},
Bo Li\IEEEauthorrefmark{3}, 
Seyit Camtepe\IEEEauthorrefmark{1}\IEEEauthorrefmark{2} and
Liming Zhu\IEEEauthorrefmark{1}
}
\IEEEauthorblockA{
\IEEEauthorrefmark{1}CSIRO's Data61, Australia\\
\IEEEauthorrefmark{2}Cyber Security Cooperative Research Centre, Australia\\
\IEEEauthorrefmark{3}University of Chicago, USA}
}

\IEEEoverridecommandlockouts
\makeatletter\def\@IEEEpubidpullup{6.5\baselineskip}\makeatother
\IEEEpubid{\parbox{\columnwidth}{
    Network and Distributed System Security (NDSS) Symposium 2025\\
    24 - 28 February 2025, San Diego, CA, USA\\
    ISBN 979-8-9894372-8-3\\
    https://dx.doi.org/10.14722/ndss.2025.24886\\
    www.ndss-symposium.org
}
\hspace{\columnsep}\makebox[\columnwidth]{}}

\maketitle

\begin{abstract}
The exploitation of publicly accessible data has led to escalating concerns regarding data privacy and intellectual property (IP) breaches in the age of artificial intelligence. To safeguard both data privacy and IP-related domain knowledge, efforts have been undertaken to render shared data unlearnable for unauthorized models in the wild. Existing methods apply empirically optimized perturbations to the data in the hope of disrupting the correlation between the inputs and the corresponding labels such that the data samples are converted into Unlearnable Examples (UEs). Nevertheless, the absence of mechanisms to verify the robustness of UEs against uncertainty in unauthorized models and their training procedures engenders several under-explored challenges. First, it is hard to quantify the unlearnability of UEs against unauthorized adversaries from different runs of training, leaving the soundness of the defense in obscurity. Particularly, as a prevailing evaluation metric, empirical test accuracy faces generalization errors and may not plausibly represent the quality of UEs. This also leaves room for attackers, as there is no rigid guarantee of the maximal test accuracy achievable by attackers. Furthermore, we find that a simple recovery attack can restore the clean-task performance of the classifiers trained on UEs by slightly perturbing the learned weights. To mitigate the aforementioned problems, in this paper, we propose a mechanism for certifying the so-called $(q, \eta)$-Learnability of an unlearnable dataset via parametric smoothing. A lower certified $(q, \eta)$-Learnability indicates a more robust and effective protection over the dataset. Concretely, we 1) improve the tightness of certified $(q, \eta)$-Learnability and 2) design Provably Unlearnable Examples (PUEs) which have reduced $(q, \eta)$-Learnability. According to experimental results, PUEs demonstrate both decreased certified $(q, \eta)$-Learnability and enhanced empirical robustness compared to existing UEs. Compared to the competitors on classifiers with uncertainty in parameters, PUEs reduce at most $18.9\%$ of certified $(q, \eta)$-Learnability on ImageNet and $54.4\%$ of the empirical test accuracy score on CIFAR-100. Our source code is available at \href{https://github.com/NeuralSec/certified-data-learnability}{https://github.com/NeuralSec/certified-data-learnability}.
\end{abstract}


%

\begin{figure}[t]
    \centering
    \includegraphics[width=.9 \columnwidth]{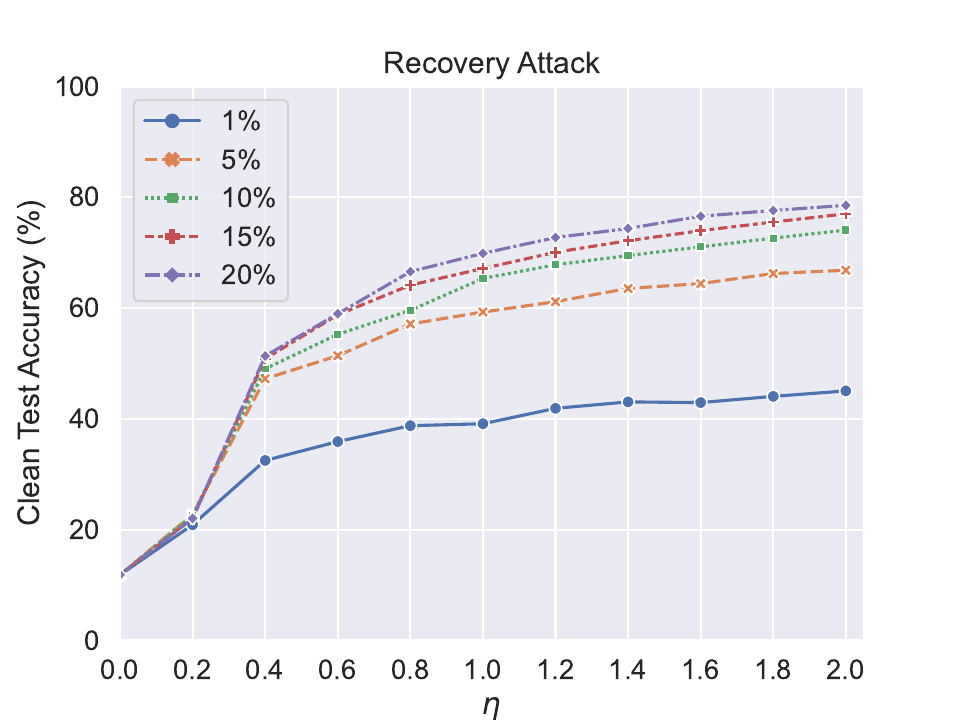}
    \caption{Recovery attacks using a small portion ($1\%$-$20\%$) of the CIFAR10 training set. Points on the curves trace the clean testing accuracy of classifiers whose weights are perturbed away from the poisoned classifier. For each fine-tuned classifier, the $\ell_2$ norm of the weight perturbation is capped by $\eta$. 
    This special adversary reveals that current UEs are not robust against uncertainty in classifier parameters, and their reliability cannot be guaranteed.
    Therefore, a mechanism for certifying UE performance is pivotal.
    }
    \label{fig:recovery_attack_demo}
\end{figure}

\section{Introduction}
Data privacy and intellectual property (IP) breaches have become major concerns in recent legislations and regulations serving for building responsible Artificial Intelligence (AI), such as GDPR~\cite{EuropeanParliament2016a}, CCPA~\cite{CAprivacy2018}, the European Union AI Act~\cite{EuropeanCommission2021}, and the recent US Executive Order on the Safe, Secure, and Trustworthy Development and Use of AI~\cite{us2023excutiveorder}.
These regulations have emphasized that consumers have the right to limit the use and disclosure of data collected from them.
Despite the laws and regulations, technologies for enhancing data privacy and protecting IP face increasingly severe challenges due to rapidly evolving machine learning algorithms.
Particularly, data published by individual users on content-sharing social media platforms is exposed to the threat of domain exploitation by unauthorized machine learning models.
For instance, a pretrained Stable Diffusion can be fine-tuned on a small set of paintings to generate new images mimicking the style of the paintings~\cite{heikkila2022art,gal2022image}. 
As another example, classifiers trained on publicly accessible data can be used to launch membership inference attacks exposing confidential personal information~\cite{shokri2017membership}.
As data becomes the new oil, these threats highlight the imminent need to implement effective measures for Data Availability Control (DAC).

As a response, a series of attempts have been made to make the published data unlearnable. 
One pragmatic and effective way to protect image datasets against unauthorized classifiers is to perturb the pixels before publishing the images.
Existing defenses apply anti-learning perturbations that minimize the training loss with respect to either correct or target labels on the input data.~\cite{fowl2021adversarial,huangunlearnable2021}.
In lieu of learning functions mapping the input data to the corresponding labels, the defenses encourage unauthorized models to learn a strong correlation (\ie a shortcut) between the perturbations and the labels. 
As a consequence, despite that the trained models can achieve low error rates on the data with the anti-learning perturbations, they generalize poorly to datasets from the clean data distribution. 
This line of work is also referred to as Perturbative Availability Poison (PAP) in the literature~\cite{feng2019learning,tao2021better,fowl2021adversarial,liu2023image}.

\noindent \textbf{Motivation.~} 
Existing PAP methods empirically search for the anti-learning perturbations based on a finite amount of training samples and models~\cite{huangunlearnable2021,furobust2022,tao2021better,fowl2021adversarial,yuan2021neural,chen2022self, he2024sharpness}. 
Henceforth, the generated perturbations may face the cross-model generalization problem brought by unknown illegal classifiers in the wild. 
Moreover, there is no rigid guarantee of the maximally attainable learning results for adversaries on the UEs. 
The widely adopted empirical test accuracy, as a metric, is prone to uncertainty in the data, model selection, and training process.
This indicates that empirical test accuracy is not a sound metric for evaluating UEs.
UEs resulting in lower test accuracy scores on certain models and datasets do not necessarily evince their superiority over others and may leave room for attackers.
Let us consider a type of recovery attack in which an attacker can fine-tune a poisoned classifier on a small set of clean data to restore its clean-task performance without significantly changing the classifier parameters.
This kind of adversary may exist in federated learning systems based on weight aggregation with Byzantine robustness~\cite{zhu2023byzantine}.
For example, an adversarial client training its unauthorized classifier on UEs can perform a recovery attack and upload the parameters to the server without alerting outlier detectors, rendering the global classifier to violate corresponding DAC protocols.
We randomly collect $1\%$-$20\%$ of CIFAR10 training samples to fine-tune a ResNet-18 trained on the error-minimizing PAP~\cite{huangunlearnable2021}.
We use projected stochastic gradient descent (SGD) in the fine-tuning so the $\ell_2$ norm of the weight differences is restricted within a small range. 
Surprisingly, we find that the accuracy can be restored from around $0.1$ to near $0.8$ by seemingly minuscule modifications in the weights using $5\%$-$10\%$ of CIFAR10 samples (Figure~\ref{fig:recovery_attack_demo}).
This result reflects that the robustness of UEs towards weight perturbations and uncertainties in classifier parameters is neither rigidly gauged nor carefully pondered.

The aforementioned problems hinder the assessment of UEs and constitute additional risks to be exploited by deliberate adaptive adversaries.
Therefore, it is crucial to develop a mechanism to ensure that a set of UEs can take effect without being ravaged by the problems of training stochasticity, cross-model generalization, and adaptive attackers.
The mechanism can resort to \textit{finding classifiers that can achieve the best possible clean-task accuracy when trained on UEs and restraining this best-case accuracy.}
We are thus motivated to ask the question:

\begin{mdframed}[backgroundcolor=grey!10,rightline=true,leftline=true,topline=true,bottomline=true,roundcorner=1mm,everyline=true,nobreak=false]  
\textit{Is it possible to go beyond heuristics and provide an upper bound towards the clean-data performance of undivulged models trained on a set of UEs?}
\end{mdframed}
The question is daunting to be thoroughly answered.
However, we will show in this work that, it is possible to derive a clean-data accuracy upper bound guaranteed with a high probability, for unauthorized classifiers under some constraints.
On the other hand, this threshold also serves as an upper bound of clean accuracy when facing the recovery attack.
It can henceforth be a tool for gauging the effectiveness and robustness of UEs.

\noindent \textbf{Our method and its broader impact.~}
In this paper, we propose a mechanism for certifying the best clean-task accuracy retained by unauthorized classifiers trained on a set of UEs.
The mechanism can be applied to any empirically made UEs to derive a \textit{certified $(q, \eta)$-Learnability}.
$(q, \eta)$-Learnability gauges the effectiveness and robustness of UEs towards arbitrary classifiers sampled from a certain parameter space.
Note that the certification pertains to the parameter space rather than the input space, alleviating the need for knowledge of the training strategies or learning algorithms used by the attackers.
We also proposed certifying a generalization $(q, \eta)$-Learnability, which removes the requirement for holding private test datasets in the evaluation of UEs.
To make the certification more sound, the paper also proposes a simple method to expand the certifiable parameter space and reduce the gap between the certified $(q, \eta)$-Learnability and the True Learnability (see Definition~\ref{def:learnabilitydef}).
Moreover, we produce \emph{Provably Unlearnable Examples} (PUEs) which achieve lower $(q, \eta)$-Learnability scores and be more robust than existing PAP methods.

The $(q, \eta)$-Learnability metric serves as a rigid guarantee of data availability for machine learning models.
This guarantee can operate in parallel with other techniques, such as deep watermarking~\cite{wang2024must} and differential privacy (DP)~\cite{hu2023sok}, to deter illegal data exploitation and support responsible AI regulations and standards (e.g., CCPA, GDPR, and Australia's AI Safety Standard~\cite{AustraliaAIsafety2023}).
Instead of post-event ownership auditing through watermarks and protection of sensitive information with DP, $(q, \eta)$-Learnability proactively controls access to domain knowledge through data, safeguarding both data IP and privacy.
The contributions of this paper are as follows:
\begin{itemize}[leftmargin=*]
    \item We formally derive the \emph{certified $(q, \eta)$-Learnability} of an unlearnable dataset. The certified $(q, \eta)$-Learnability serves as a guaranteed upper bound of the clean test accuracy that can be achieved by particular classifiers trained on the unlearnable dataset. To the best of our knowledge, this is the \textit{first} attempt towards guaranteed effectiveness of PAPs.
    \item As the \textit{first} step towards DAC with provable guarantees, we propose a simple method to narrow the gap between certified $(q, \eta)$-Learnability and True Learnability. Moreover, we design PUEs that can suppress the certified $(q, \eta)$-Learnability.
    \item PUEs not only achieved lower certified $(q, \eta)$-Learnability but also demonstrated better empirical robustness in scenarios where unauthorized classifiers cannot be certified.
    \item Our source code provides a tool for measuring the provable effectiveness of UEs and producing PUEs.
\end{itemize}



\section{Background}
In this section, we briefly introduce the concepts of UEs and smoothed classifiers.

\subsection{Unlearnable Examples}
Let a set $\sD_s :=\{(x_i, y_i)|(x_i,y_i)\in \gX \times \gY\}_{i=1}^N$ be a source dataset containing $N$ labeled samples. 
$\gX$ and $\gY$ are the $I$-dimensional input space and the $K$-dimensional label space, respectively.
$\gX \times \gY$ is the Cartesian product of the two spaces.
The aim of the adversary is to train a classifier $f: \gX\subset\R^I \rightarrow \gY\subset\R^K$ by empirically minimizing a training loss $\gL(\cdot)$ over $\sD_s$ as follows:
\begin{equation}
    \min_{f} \frac{1}{N} \sum_{i=1}^{N} \gL (f(x_i), y_i).
\end{equation}
Suppose $\gD$ is the clean distribution from which $\sD_s$ is drawn.
The trained $f$ is supposed to perform well on samples from $\gD$.
Once $\gD$ is a data distribution from which other datasets with privacy concerns or registered intellectual property are drawn, it is crucial to make sure $\sD_s$ cannot be exploited in the unauthorized training of models which will later be applied to infer other private datasets drawn from $\gD$.
Instead of maximizing the training error, the common goal of the defender is to find a perturbation $\delta\in\gX$, such that
\begin{equation}
 \begin{aligned}
    \min_{\hat{f}}\,& \E_{(x,y)\sim\sD_s} \min_{\delta} \gL (\hat{f}(x+\delta),\, y)\\
              & s.t.\,\|\delta\|_p \leq \xi,
 \end{aligned}
\end{equation}
where $\hat{f}$ is a surrogate model used for searching $\delta$ and $\|\delta\|_p$ is the $\ell_p$-norm of the perturbation bounded by $\xi$. 
The $\ell_{\infty}$ norm is commonly used in the previous literature.
Such $x+\delta$ is a UE and $\delta$ is the PAP noise inserting a noise-label shortcut to minimize the training loss.
However, since the loss is empirically calculated, there is no rigid guarantee on the performance of the noise $\delta$.

\subsection{Smoothed Classifiers} 
A smoothed classifier $g(x)$ predicts by returning the majority vote of predictions from a $K$-way base classifier $f_{\theta}:\R^I\rightarrow\{1,2,...,K\}$ parameterized by $\theta$ over random noises applied to either the input $x\in\R^I$ or the parameters during the test time. 
That is,
\begin{equation}
\begin{aligned}
    g(x)= & \argmax_{y\in\gY}{\Pr_{\epsilon\sim\pi(x)}[f_{\theta}(x+\epsilon)=y]}, \hspace{1em} or \\
    g(x)= & \argmax_{y\in\gY}{\Pr_{\epsilon\sim\pi(\theta)}[f_{\theta+\epsilon}(x)=y]}.
\end{aligned}
\end{equation}
$\pi(x)$ is the distribution of the noise (\ie smoothing distribution) centered at $x$ (or $\theta$) and $y$ is a predicted label from the label space $\gY$. 
The output from the smoothed classifier can be statistically bounded to offer certifiable robustness with respect to changes in the smoothed variables.
Cohen et al. derived a tight $\ell_2$ robustness bound by smoothing classifier inputs with noise drawn from Gaussian distributions and certifying based on Neyman-Pearson Lemma~\cite{cohen2019certified}.
Subsequently, the certified robustness is extended to various smoothing noises and certification frameworks~\cite{fischer2020certified,li2021tss, hao2022gsmooth,sukenik2022intriguing,cullen2022double,dvijotham2020framework,zhang2020black,salman2020denoised, yang2020randomized}. 
Such smoothed classifier $g(x)$ can also be extended to randomized learning functions which in addition smooth the training dataset $\sD_s$, resulting
\begin{equation}\label{eq:random_lf}
     g(\sD_s, x) = \argmax_{y\in\gY} \Pr_{\hat{\sD}_s\sim\pi(\sD_s)} [f_{\theta}(\hat{\sD}_s,\ x) = y].
\end{equation} 
Once we get the background information of UEs and randomized classifiers, we will move to define the research problem and the threat model of this paper in the following section. 

\begin{figure*}[t]
    \centering
    \includegraphics[width=.9\textwidth]{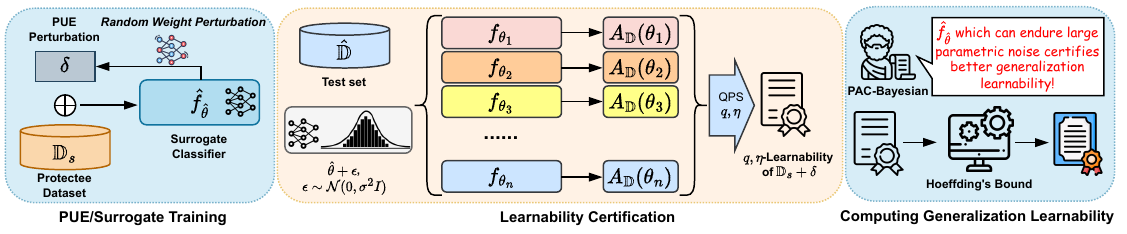}
    \caption{An overview of the certification and PUE crafting framework. A dataset $\sD_s$ is perturbed into $\sD_s \oplus \delta$ before being released to the public. The $(q, \eta)$-Learnability of $\sD_s \oplus  \delta$ can be certified to ensure that any unauthorized classifier trained on $\sD_s \oplus  \delta$ has a provable upper bound on its performance on any test set $\hat{\sD}$ (or $\sD$) in the same domain with $\sD_s \oplus \delta$, as long as the parameters of the unauthorized classifier are within a certified parameter set. Generalization learnability can be computed using Hoeffding's bound, and PAC-Bayesian theory suggests that a certification surrogate with low prediction variance under large parametric noise can improve certified learnability. Optimized PUEs lead to lower $(q, \eta)$-Learnability.}
    \label{fig:framework_overview}
\end{figure*}

\section{Problem Statement}
\noindent \textbf{Problem definition.~}
Inheriting from the previous definitions, consider a to-be-published dataset $\sD_s$ (e.g., portraits to be posted on Facebook) with $N$ data points. 
Additionally, there is a private dataset $\sD$ (e.g., patient records with photos in a hospital database) sharing the same distribution $\gD$ with $\sD_s$.
The defender's goal is to render $\sD_s$ unlearnable against unknown learning algorithms and classifiers (e.g., deep neural networks) by perturbing it into $\sD_s \oplus \delta$, where $\delta$ is the anti-learning perturbation and $\oplus$ is the perturbing operator.
We specifically consider additive perturbations $\delta=\{\delta_i\}^{|\sD_s|}_{i=1}$ applied to each sample of $\sD_s$ in this paper.
Thus, $\oplus$ represents the element-wise addition.
``Unlearnable'' here means that a classifier trained on $\sD_s \oplus \delta$ would exhibit a high classification error on $\sD$.
Furthermore, the defender wants to verify how robust $\sD_s \oplus \delta$ is by having an upper bound on the accuracy that can be achieved by unknown classifiers stochastically trained on $\sD_s \oplus \delta$.
More profoundly, the defender wants to determine the best possible accuracy that classifiers trained on $\sD_s \oplus \delta$ can achieve on $\sD$. The defender does not desire an elevated best possible accuracy since it implies $\sD_s \oplus \delta$ is exploitable.
Therefore, the research goals are as follows.
\begin{mdframed}[backgroundcolor=grey!10,rightline=true,leftline=true,topline=true,bottomline=true,roundcorner=1mm,everyline=true,nobreak=false]  
\begin{itemize}[leftmargin=*]
    \item A certification mechanism should be developed to verify the maximum accuracy achievable by any classifiers and learning algorithms on $\sD$ through training on $\sD_s \oplus \delta$.
    \item A better noise $\delta$ should be designed such that $\sD_s \oplus \delta$ can effectively decrease the verified best accuracy. 
\end{itemize}
\end{mdframed}

To better formulate the problem, we first formally define the \emph{True Learnability} of $\sD_s \oplus \delta$ for classification tasks.
\begin{restatable}
[\textit{True Learnability of a perturbed dataset}]{definition}{learnabilitydef}\label{def:learnabilitydef}
Given a $\delta$-perturbed dataset $\sD_s \oplus \delta$, let $f_{\hat{\theta}}$ be a hypothesis with parameters $\hat{\theta}\in\R^d$ selected by arbitrary learning algorithm $\Gamma(\cdot)$, such that $\hat{\theta}=\Gamma(\sD_s \oplus \delta)$.
$\sD$ and $\sD_s$ come from the same data distribution $\gD$.
Suppose there are sample pairs $(x,y)\in\sD$. 
The learnability of the perturbed training dataset $\sD_s \oplus \delta$ is
\begin{equation}\label{eq:learnability_def}
\begin{aligned}
    \tL(\Theta; \sD_s\oplus\delta) := & \max_{\hat{\theta} \in \Theta}\, \E_{(x,y)\sim\sD} \mathds{1}[f_{\hat{\theta}}(x) = y],\\
\end{aligned}
\end{equation}
where $\mathds{1}[\cdot]$ is the indicator function and $\Theta$ is the space of all possible parameters that can be selected.
\end{restatable}
\noindent
Herein, $\hat{\theta}$ can be viewed as a function of $D_s\oplus\delta$.
In other words, $\tL(\Theta; \sD_s\oplus\delta)$ is the \emph{best possible} testing accuracy a classifier from the space $\Theta$ can achieve when evaluated on $\sD$.
Note that the learnability defined here differs from PAC learnability~\cite{valiant2013probably}.
$\tL(\cdot)$ deterministically reflects the lowest generalization error any classifier in the space $\Theta$ can achieve.
A smaller $\tL(\cdot)$ indicates that the learned classifier is harder to be generalized to the test-time dataset.
After having the definition of learnability, we can formulate the aim of an anti-learning perturbation $\delta$ as follows.
\begin{equation}\label{eq:unlearnable_objective}
    \begin{aligned}
        \min_{\delta}& \ \tL(\Theta; \sD_s\oplus\delta),\\
        s.t.& \ \forall\ \delta_i\in\delta,\ \|\delta_i\|_p \leq \xi.
    \end{aligned}
\end{equation}
Herein, $\xi$ is a fixed $\ell_p$ perturbation budget for noises added to each data point.
An optimal anti-learning noise is the minimizer of the learnability.

In order to find the $\delta$, we need to first compute $\tL(\Theta; \sD_s\oplus\delta)$.
Solving $\tL(\Theta; \sD_s\oplus\delta)$ in an unknown space $\Theta$ is intractable.
However, we will show that a learnability score guaranteed with a high probability can be found if we restrict the space of possible hypothesis by only considering $\hat{\Theta}:=\{\theta\ |\ \theta\sim\N(\hat{\theta}+\upsilon, \sigma^2I),\ \|\upsilon\| \leq \eta\}$, where $\|\cdot\|$ is the $\ell_2$ norm, $\upsilon$ is a parametric perturbation, $\eta$ and $\sigma$ are two constants, and $\hat{\theta}$ is a set of surrogate parameters selected by the defender based on the dataset $\sD_s\oplus\delta$ to form the hypothesis $f_{\hat{\theta}}$.
If a set of PAP noises can reduce such a learnability score, they could achieve theoretically more robust protection of datasets against the uncertainty in classifiers and learning algorithms.

In essence, the problem to be addressed in this paper is two-fold. 
First, we certify the learnability of a dataset towards arbitrary classifiers trained on it as long as the trained parameters are from $\hat{\Theta}$.
This setting non-trivially appears in data exploitation cases through \textit{model retraining}, \textit{transfer learning}, \textit{model fine-tuning}, and \textit{recovery attacks}.  
Second, we strive to reduce the learnability of the protected dataset by designing a proper $\delta$. 
We will present in the following part a threat model of the paper. 

\noindent \textbf{Threat model.~}
The producer of UEs (\ie defender) has white-box access to a surrogate classifier $f_{\hat{\theta}}$ and a protectee dataset $\sD_s\sim\gD$.
Additionally, either a clean test dataset $\sD$ or the domain $\gD$ of $\sD_s$ is revealed to the defender.
Note that it is \textit{not} necessary for the defender to access $\sD$ if it is privately held by a third party.
Instead, the defender can sample a dataset $\hat{\sD}$ from $\gD$ to compute a certified generalization learnability which serves as a generalization upper bound on the certified learnability.
The defender can alter the parameters of $f_{\hat{\theta}}$ and modify $\sD_s$ at will.
The defender crafts an unlearnable version $\sD_s \oplus \delta$ of $\sD_s$ and releases the unlearnable dataset rather than the unprotected one to the public.
Besides, defenders can either train surrogate classifiers on generated $\sD_s \oplus \delta$ or simultaneously with $\delta$ on $\sD_s$. 
However, the defender has zero access to any potential unauthorized classifiers and is unaware of the training procedure for the unauthorized classifiers.

As the adversaries in our threat model, the unauthorized classifiers can obtain labeled $\sD_s \oplus \delta$ as their training dataset. 
The trained unauthorized classifiers will be used to infer the labels of samples in $\gD$.
Importantly, the exact anti-learning perturbation $\delta$ is hidden from the adversaries.
Our defense tackles three levels of adversaries.
\begin{itemize}[leftmargin=*]
    \item \textbf{General Adversary (GA)}: Unauthorized classifier whose trained parameters are in $\mathbb{R}^d$, where $d$ is the parameter size.
    \item \textbf{Certifiable Adversary (CA)}: Unauthorized classifier whose trained parameters fall in a certified parameter set $\hat{\Theta}$.
    \item \textbf{Special Adversary (SA)}: Recovery attacker which changes the model parameters within an $\ell_2$ norm bound.
\end{itemize}
Among these adversaries, GA encompasses almost all attackers in real-world scenarios. 
CA denotes the subset of attackers from GA that can be provably mitigated by the defense. 
SA serves as a tool to gauge the robustness of different UEs against GAs who cannot be provably invalidated, as well as to assess the tightness of our defense against CAs.
The defender aims to expand $\hat{\Theta}$ to improve the proportion of CAs within GAs.
We will introduce in Section~\ref{sec:pue} a method for augmenting $\hat{\Theta}$.


\section{Overview}\label{sec:overview}
We will provide a brief overview over the crux of learnability certification and PUE generation in this section.
Our first desideratum is measuring how well an unlearnable dataset can sabotage those illegal but unknown classifiers by certifying a possibly best learning outcome on this dataset.
In a nutshell, a surrogate classifier selected based on the unlearnable set $\sD_s\oplus\delta$ can be viewed as a hypothesis of the data distribution. 
Intuitively, if the distribution of the unlearnable set is highly disjoint from that of clean data, the classifier would demonstrate a significant generalization error on the clean test dataset.
In contrast, if the unlearnable set is suboptimal, the classifier might still be able to generalize to the 
clean dataset and thus induce a lower generalization error.
Herein, we convert the certification problem to certifying a surrogate classifier appropriately modeling the distribution of UEs.

Second, we design a simple method for selecting the surrogate classifier used in the certification.
Importantly, we emphasize that the two entities, the UE and the surrogate classifier, serve different objectives.
On the side of UEs, their purpose is upfront --- to minimize the learnability by lowering the clean testing accuracy of any classifiers trained on them.
Instead, the surrogate should \textit{1) resemble the distributional characteristics of UEs} and \textit{2) have a high chance of discovering better clean test accuracy when its parameters are perturbed.}
In lieu of directly using classifiers trained by UEs as surrogates, we design a better way to construct surrogates meeting these two criteria.

In the following sections, we divide our method into two major parts.
In the first part, we will introduce how to certify the learnability of a dataset through a surrogate classifier.
Subsequently, we will show how PAP noises and a surrogate can be constructed to make more sound learnability certification in the UE regime.
At the same time, an overview of our framework is summarized in Figure~\ref{fig:framework_overview}.

\section{Learnability Certification}\label{sec:lcert}
In this section, we will first describe the learnability certification process in detail. 
Certifying the learnability of a dataset over an unknown space of classifiers is a daunting task. 
However, it is possible to certify the learnability of a dataset towards classifiers whose parameters are in a particular set. 

\subsection{Quantile Parametric Smoothing}
To effectively analyze classifiers in the space $\Theta$, we first introduce a Quantile Parametric Smoothing (QPS) function in the following definition.
\begin{restatable}
[\textit{Quantile Parametric Smoothing function}]{definition}{percsmooth}\label{def:perc_smooth}
Given a dataset $\sD$ from the space $\gX\times\gY$, an $\sD$-parameterized function $A_{\sD}:\Theta \rightarrow [0,1]$ with an input $\theta\in\Theta$, and a parametric smoothing noise $\epsilon\sim\N(0,\sigma^2I)$ under a standard deviation of $\sigma$, a Quantile Parametric Smoothing function $h_q(\theta)$ is defined as:
\begin{equation}\label{eq:discrete_QPS}
\begin{aligned}
    h_q(\theta) = \inf\ \{ t\ |\ \Pr_{\epsilon}[A_{\sD}(\theta + \epsilon)\leq t]\geq q\},
\end{aligned}
\end{equation}
where $q\in[0, 1]$ is a probability.
\end{restatable}
\noindent
The definition connects to percentile smoothing used in certified robustness~\cite{chiang2020detection, bansal2022certified}.
In lieu of randomizing the inputs of classifiers, QPS takes the parameters (\ie weights) of classifiers as its variables and randomizes the parameters.

The QPS function has several nice properties for the task of learnability certification.
First, it returns the accuracy score in the $q$-th quantile of the samples/population.
By tuning the value $q$, QPS can naturally suit the need for computing the possibly highest accuracy that can be obtained by hypotheses, such that the learnability can be approximated.
Second, QPS considers all hypotheses whose parameters are defined on the support of the Gaussian smoothing distribution (\ie the set $\R^d$ of real numbers, where $d$ is the dimensionality of $\theta$). 
This set of hypotheses covers almost all real-world classifiers with parameters as real numbers. 
However, most of the Gaussian probability mass concentrates around the mean, making it nearly impossible in practice to sample parameters far away from $\theta$.
We are thus motivated to consider the QPS function with varying mean values such that parameters from a more extensive space can be sampled.

\subsection{Certified Learnability}
To apply the QPS function to obtain a learnability certification, we explicitly define $A_{\sD}$ as a performance metric in this section.
Without loss of generality, such $A_{\sD}(\hat{\theta})$ can be the top-1 accuracy function of a classifier ${f}_{\hat{\theta}}$ mapping its input $x$ to a label $y$, where $(x,y)\in\sD$.
That is
\begin{equation}\label{def:learnability_function}
    A_{\sD}(\hat{\theta}) := \E_{(x,y)\sim\sD}\mathds{1}[{f}_{\hat{\theta}}(x)=y].
\end{equation}
Note that the parameters of the classifier are the variables of $A_{\sD}$.
Subsequently, we can obtain ${h}_q(\hat{\theta})$ by substituting such $A_{\sD}(\hat{\theta})$ into Equation~\ref{eq:discrete_QPS}. 
An empirical value of $h_q(\hat{\theta})$ can be approximately calculated by evaluating $A_{\sD}(\hat{\theta}+\epsilon)$ multiple times given sampled $\epsilon$ values.
Suppose $f_{\hat{\theta}}$ is a surrogate classifier trained on an unlearnable dataset $\sD_s\oplus\delta$ by the defender, ${h}_q(\hat{\theta})$ indicates that, with probability at least $q$, there exists an accuracy upper bound for all classifiers $f_{\hat{\theta}+\epsilon}$ whose parameters are from the support of $\N(\hat{\theta},\sigma^2I)$.

We further move to the case in which the parameters are sampled from Gaussians with varying means. 
Specifically, we sample parameters from $\N(\hat{\theta}+\upsilon,\sigma^2I)$, where $\|\upsilon\|\leq \eta$.
When $\eta$ gets larger, we become more capable of sampling classifiers with diverse parameters.
Given a sufficiently large $\eta$, if there still exists an upper bound of the accuracy scores, then this upper bound can be used as a certificate of the best clean test accuracy.
In other words, we want to certify an upper bound of ${h}_{{q}}(\hat{\theta}+\upsilon)$, $\forall \upsilon: \|\upsilon\|\leq \eta$. 
Such upper bound is called the \textit{certified $(q, \eta)$-Learnability} of $\sD_s\oplus\delta$. 
$\eta$ is a \textit{certified parametric radius of mean} describing the parameter space in which $(q, \eta)$-Learnability of $\sD_s\oplus\delta$ can be guaranteed with probability at least $q$.
For simplicity, we refer to $\eta$ as the \textit{certified parametric radius} in the following paper.
To this point, the task of learnability certification becomes clear.

\begin{figure}[t]
    \centering
    \includegraphics[width=.93\linewidth]{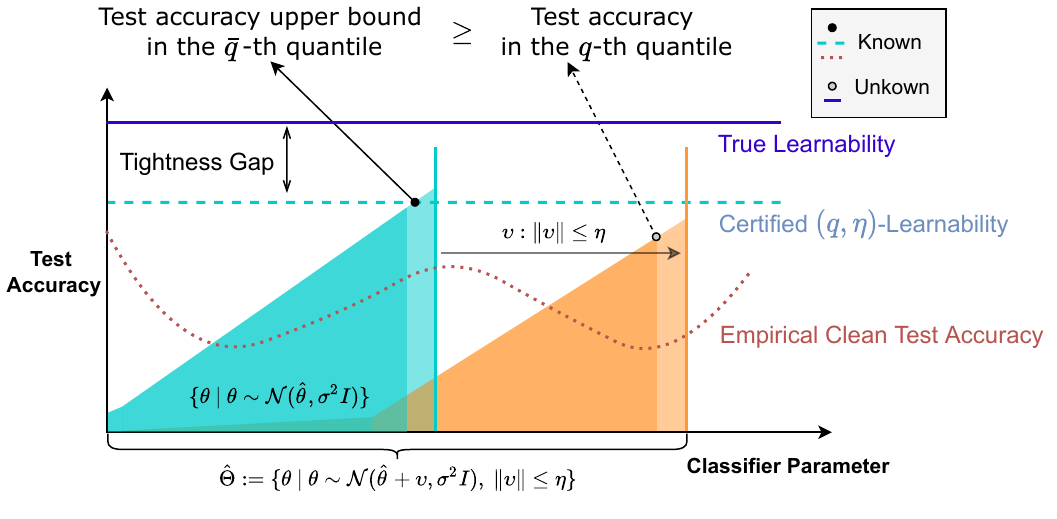}
    \caption{An illustration of the certified $(q, \eta)$-Learnability.}
    \label{fig:cert_demo}
\end{figure}

Next, we aim to construct such certified $(q, \eta)$-Learnability with a closed form so that it can be computed. 
Importantly, the certification of $(q, \eta)$-Learnability is provided through the following theorem:
\begin{restatable}
[\textit{Perturbation bound on QPS}]{theorem}{lcert}\label{theorem:cert_learnability}
Let $\Gamma: \gX\times\gY \rightarrow \hat{\theta}\in\Theta$ be a learning function selecting $\hat{\theta}$ from the parameter space $\Theta$ based on a dataset defined in $\gX\times\gY$.
Given an target dataset $\sD$ and a quantile smoothed function $h_q(\hat{\theta})$ centered at a Gaussian $\N(\hat{\theta}, \sigma^2I)$, then there exists an upper bound for ${h}_{q}(\hat{\theta}+\upsilon)$. 
Specifically,
\begin{equation}\small\label{eq:QPS_bound}
    \begin{aligned}
       & {h}_{q}(\hat{\theta}+\upsilon) \leq \inf\ \{t\ |\ \Pr_{\epsilon}[A_{\sD}(\hat{\theta}+\epsilon) \leq t] \geq \overline{q}\},\ \forall\ \|\upsilon\|\leq\eta,
    \end{aligned}
\end{equation}
where $\overline{q}:=\Phi( \Phi^{-1}(q) + \frac{\eta}{\sigma})$.
$\Phi(\cdot)$ is the standard Gaussian CDF and $\Phi^{-1}(\cdot)$ is the inverse of the CDF.
$\|\upsilon\|$ is the $\ell_2$ norm of the parameter shift $\upsilon$ from $\hat{\theta}$.
\end{restatable}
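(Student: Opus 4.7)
The plan is to reduce the quantile perturbation bound to the standard Gaussian mean-shift inequality that underlies input-space $\ell_2$ smoothing bounds~\cite{cohen2019certified}, and then translate the resulting CDF comparison into a quantile comparison via the usual duality. To set this up, I would fix any threshold $t\in[0,1]$ and introduce the sublevel set $B_t:=\{\theta\in\R^d: A_{\sD}(\theta)\leq t\}$. The inner probability in Definition~\ref{def:perc_smooth} can then be rewritten as $F_\theta(t):=\Pr_\epsilon[A_{\sD}(\theta+\epsilon)\leq t]=\Pr_{X\sim\N(\theta,\sigma^2 I)}[X\in B_t]$, so the whole statement becomes a claim about how the isotropic Gaussian measure of a fixed measurable set changes when its mean is translated by at most $\eta$.

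The core technical step is the following Neyman--Pearson-type inequality, which I would either cite or derive along the lines of Cohen et al.: for every measurable $B\subseteq\R^d$ and every $\upsilon$ with $\|\upsilon\|\leq\eta$,
\[
\Phi^{-1}\!\bigl(\Pr_{X\sim\N(\hat{\theta}+\upsilon,\sigma^2 I)}[X\in B]\bigr) \;\geq\; \Phi^{-1}\!\bigl(\Pr_{X\sim\N(\hat{\theta},\sigma^2 I)}[X\in B]\bigr) - \eta/\sigma.
\]
The extremal sets are half-spaces orthogonal to $\upsilon$, which reduces the computation to a one-dimensional Gaussian tail comparison, and the factor $\eta/\sigma$ is precisely the signal-to-noise ratio of that 1D projection. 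Specializing $B=B_t$ gives the pointwise CDF lower bound $F_{\hat{\theta}+\upsilon}(t) \geq \Phi\!\bigl(\Phi^{-1}(F_{\hat{\theta}}(t))-\eta/\sigma\bigr)$, uniformly in $t$ and in $\upsilon$ with $\|\upsilon\|\leq\eta$.

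To close out, I would apply the quantile/CDF duality $h_q(\theta)\leq t \iff F_\theta(t)\geq q$. Writing $\overline{q}:=\Phi(\Phi^{-1}(q)+\eta/\sigma)$, any $t$ satisfying $F_{\hat{\theta}}(t)\geq\overline{q}$ forces $F_{\hat{\theta}+\upsilon}(t)\geq\Phi(\Phi^{-1}(\overline{q})-\eta/\sigma)=q$ by the CDF bound, and hence $h_q(\hat{\theta}+\upsilon)\leq t$; taking the infimum over all such $t$ delivers exactly the right-hand side of~\eqref{eq:QPS_bound}. The main obstacle is invoking the Gaussian shift inequality in the correct direction: the same Neyman--Pearson argument also produces a matching upper bound $F_{\hat{\theta}+\upsilon}(t)\leq \Phi(\Phi^{-1}(F_{\hat{\theta}}(t))+\eta/\sigma)$, and one must carefully use the lower direction when certifying an \emph{upper} bound on the quantile. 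A minor technical wrinkle is handling the degenerate values $q\in\{0,1\}$, where $\Phi^{-1}$ diverges; these cases are absorbed by working with the generalized inverse and a standard continuity/limit argument.
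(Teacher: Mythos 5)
Your proposal is correct and follows essentially the same route as the paper's proof: the Gaussian mean-shift (Neyman--Pearson half-space) inequality you invoke is exactly the $1$-Lipschitz property of $\sigma\Phi^{-1}(\Pr_{\epsilon}[A_{\sD}(\cdot+\epsilon)\leq t])$ that the paper cites from Salman et al., and the quantile/CDF duality step is identical. The only cosmetic difference is that you establish the CDF comparison uniformly over all thresholds $t$ and then take the infimum, whereas the paper applies the bound at the single threshold $t=h_{\overline{q}}(\hat{\theta})$; both close the argument the same way.
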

\noindent
We defer the proof of Theorem~\ref{theorem:cert_learnability} to Appendix~\ref{append:proof}.
The theorem states that the return of the QPS function is bounded with respect to the perturbation $\upsilon$ in its input.
Recall that ${h}_{q}(\hat{\theta}+\upsilon)$ is actually an upper bound of the clean test accuracy for classifiers whose parameters $\theta\sim\N(\hat{\theta}+\upsilon, \sigma^2I)$, with probability at least $q$.
Therefore, the right-hand side of Inequality~\ref{eq:QPS_bound} actually guarantees the best possible clean test accuracy for classifiers with parameters from the subspace $\hat{\Theta}:=\{\theta\ |\ \theta\sim\N(\hat{\theta}+\upsilon, \sigma^2I),\ \|\upsilon\| \leq \eta\}$.
We name such $\hat{\Theta}$ as a \textit{certified parameter set}.
Though it seems that $\hat{\Theta}$ covers all parameters in $\R^d$, we only consider those $\theta$ having a sufficiently large probability of being sampled.

Based on Theorem~\ref{theorem:cert_learnability}, we have the formal definition of $(q, \eta)$-Learnability as follows.
\begin{restatable}
[\textit{$(q, \eta)$-Learnability}]{definition}{qetadef}\label{definition:q_eta_learnability}
Suppose a learning function $\Gamma$ selects $\hat{\theta}$ based on an unlearnable dataset $\sD_s\oplus\delta$. The certified $(q, \eta)$-Learnability of $\sD_s\oplus\delta$ is
\begin{equation}\label{eq:learnability}
\begin{aligned}
    l_{(q,\eta)}(\hat{\Theta}; \sD_s\oplus\delta) = \inf\ \{t | \Pr_{\epsilon}[A_{\sD}(\hat{\theta}+\epsilon) \leq t] \geq \overline{q}\},
\end{aligned}
\end{equation}
where $\overline{q}=\Phi( \Phi^{-1}(q) + \frac{\eta}{\sigma})$.
For any $\theta^*$ drawn from the certified parameter set $\hat{\Theta}:=\{\theta\ |\ \theta\sim\N(\hat{\theta}+\upsilon, \sigma^2I),\ \|\upsilon\| \leq \eta\}$, there is $A_{\sD}(\theta^*) \leq l_{(q,\eta)}(\hat{\Theta}; \sD_s\oplus\delta)$ with probability no less than $q$.
\end{restatable}

It should be noted that, though $\sD_s\oplus\delta$ does not explicitly appear in the function, $\hat{\theta}$ is selected by a learning function $\hat{\theta} = \Gamma(\sD_s\oplus\delta)$ given $\sD_s\oplus\delta$.
Henceforth, $\hat{\theta}$ can be viewed as a function of $\sD_s\oplus\delta$ in this case.
In practice, we can sample $A_{\sD}(\hat{\theta}+\epsilon)$ using Monte Carlo and obtain the accuracy in the empirical $\overline{q}$-th quantile.
Next, a Binomial confidence interval upper bound of the accuracy can be adopted as $l_{(q,\eta)}(\hat{\Theta}; \sD_s\oplus\delta)$ with a confidence level of $1-\alpha$.
Detailed steps of computing the confidence interval upper bound are in Appendix~\ref{append:algorithm_details}.

Since $l_{(q,\eta)}(\hat{\Theta}; \sD_s\oplus\delta)$ is an upper bound only for accuracy obtained within $\hat{\Theta}$, there may exist a gap between such $l_{(q,\eta)}(\hat{\Theta}; \sD_s\oplus\delta)$ and the True Learnability.
Specifically, 
\begin{equation}
    \bigtriangleup l = \tL(\Theta; \sD_s\oplus\delta) - l_{(q,\eta)}(\hat{\Theta}; \sD_s\oplus\delta).
\end{equation}
The gap $\bigtriangleup l \in [0,1]$ represents the \textit{tightness} of the certified $(q, \eta)$-Learnability. 
In order to make sense of using the certified $(q, \eta)$-Learnability as a measurement for UEs, the tightness gap should be minimized.
Considering the concentration of the Gaussian probability mass, $\hat{\Theta}$ with a large $\eta$ can have higher probability measures at locations far from $\hat{\theta}$, which helps reduce the tightness gap.
A sketch describing the certified $(q, \eta)$-Learnability of a one-parameter classifier is illustrated in Figure~\ref{fig:cert_demo}.
We will introduce the practical certification algorithm in the next section.

\subsection{Certification Algorithm}
We summarize the certification algorithm in this section.
We first train a surrogate classifier $f_{\hat{\theta}}$ of which the parameters $\hat{\theta}$ will be randomized for $n$ times over a Gaussian distribution to calculate the QPS function.
For each randomization, the resulting classifier with parameters $\hat{\theta}+\epsilon$ is evaluated on the clean test set $\sD$ such that $A_{\sD}(\hat{\theta}+\epsilon)$ is obtained.
Next, $\overline{q}$ can be theoretically computed based on Theorem~\ref{theorem:cert_learnability}.
Finally, we compute the confidence interval of the $A_{\sD}(\hat{\theta}+\epsilon)$ in the $\overline{q}$-th quantile to obtain the upper bound of the interval as $l_{(q,\eta)}(\hat{\Theta}; \sD_s\oplus\delta)$.
The detailed certification process is depicted in Algorithm~\ref{alg:quantile_estimate} and Algorithm~\ref{alg:cert}.
In addition, a meticulous description is in Appendix~\ref{append:algorithm_details}.

There are a few important things to be noticed.
A higher $q$ is recommended for obtaining a learnability upper bound with high probability, and a larger $\eta$ helps establish a more inclusive certified parameter set.
In practice, both parameters can be optimized through grid search.
Moreover, the $\sigma$ of the parametric smoothing noise should be selected with caution. 
There is an intuition behind randomizing the surrogate weights and computing the QPS function --- 
\textit{some of the randomized classifiers can outperform the surrogate classifier on the clean test set}.
Notwithstanding, we find that, when an improperly large $\sigma$ is employed, the number of randomized classifiers having a higher accuracy than that of the surrogate drops significantly. 
The possible reason for this phenomenon could be that the shortcuts added to the UEs cannot fully conceal the mapping information between input samples and their labels.
Hence, the surrogate trained on the unlearnable dataset, though not adequately, is somehow fit to the clean data distribution as well.  
Large noises added to the parameters thus not only shift the surrogate away from the distribution of UEs but also move it off the clean data manifold.
This assumption is further verified by the recovery attack, which discovers classifiers with satisfactory performance on clean data in the vicinity of the poisoned classifier.
Therefore, we will further introduce the methods for constructing sound surrogates for tighter certification.

\begin{algorithm}[t]
\footnotesize
\caption{Quantile Upper Bound}\label{alg:quantile_estimate}
\textbf{func} \textsc{QUpperBound} \\
\KwIn{noise draws $n$, $\alpha$, $\sigma$, $\eta$, quantile $q$.}
\KwOut{Index of the value in the $q$-th quantile}
$\overline{q} \gets \Phi(\Phi^{-1}(q) + \frac{\eta}{\sigma})$ \\
$\underline{k},\ \overline{k}\ \gets \ \lceil n * \overline{q}\rceil,\ n$ \\
$k^*\ \gets\ 0$ \\
\For{$k\in \{\underline{k}, \underline{k}+1,..., \overline{k}\} $}
{
    \If{$\textsc{Binomial}(n, k, \overline{q}) > 1 - \alpha$}
    {
        $k^*\ \gets k$ \\
    }
    \Else 
    {
        $ Continue $ \\
    }
}
\If{$k^* \neq 0$}
{
    \KwOut{$k^*$}
}
\Else 
{
     \textsc{Abstain}
}
\textbf{func} \textsc{Binomial} \\
\KwIn{Sampling number $n$, $k$, $\overline{q}$.}
$\textsc{Conf}\ \leftarrow\ \sum_{i=1}^{k}\binom{n}{k}(\overline{q})^i(1-\overline{q})^{n-i}$ \\
\KwOut{$\textsc{Conf}$}
\end{algorithm}


\begin{algorithm}[t]
\footnotesize
\caption{$(q, \eta)$-Learnability Certification}\label{alg:cert}
\KwIn{Accuracy function $A_{\sD}$, surrogate weights $\hat{\theta}$, test set $\sD$, $n$, $\sigma$, $q$, $\eta$.}
\KwOut{$(q, \eta)$-Learnability}
Initialize $a$ \\
\For {$i \in 1,...,n$}
{
    $\theta\ \gets \ \hat{\theta}+\epsilon$, $\epsilon\sim\N(0,\sigma^2I)$ \\
    Evaluate $A_{\sD}({\theta})$ on $\sD$ \\
    Append $A_{\sD}({\theta})$ to $a$ \\
}
$a\ \gets \ Sort(a)$ \\
$k\ \gets \ \textsc{QUpperBound}(n, \alpha, \sigma, \eta, q)$ \\
$t\ \gets \ a_{k}$ \\
\KwOut{$t$}
\end{algorithm}

\subsection{Generalization of the Certification}

In this section, we generalize the certification to the case where the defender has no access to the test dataset $\sD$ and discuss more properties of the generalization certification.

As a solution, the defender can sample a test set $\hat{\sD}$ from the accessible domain $\gD$ and certify a \textit{generalization $(q,\eta)$-Learnability} which bounds the learnability on test sets drawn from $\gD$.
By Hoeffding's Inequality, {\small$A_{\gD}(\hat{\theta}) \leq A_{\hat{\sD}}(\hat{\theta}) +\sqrt{\frac{1}{2N}\log(\frac{2}{\beta})}$} exists with probability at least $1-\beta$.
The right-hand side of the inequality can be used in the QPS function instead of $A_{\sD}(\hat{\theta})$.
Combining with the union bound, a certified generalization $(q,\eta)$-Learnability can be acquired as {\small$l_{(q,\eta)}(\hat{\Theta}; \sD_s\oplus\delta)+\sqrt{\frac{1}{2N}\log(\frac{2n}{\beta})}$}, where $n$ is the number of noise draws.
However, Hoeffding's Inequality does not account for factors that improve generalization learnability.
Therefore, we explore alternative approaches to establish the relationship between generalization learnability and the key components involved in the certification process.
Notably, classifiers sampled around a surrogate relate to PAC-Bayesian predictors (\eg Gibbs classifiers) whose parameters are randomly drawn from a distribution~\cite{mcallester2003simplified}. 
We can thus bound the expected generalization accuracy under random weight perturbation through the following corollary. 
\begin{restatable}
[\textit{Expected generalization accuracy under parametric smoothing noise}]{corollary}{generr}\label{theorem:generalization_err}
Let $A_{\hat{\sD}}:\Theta\times\gX\times\gY\rightarrow [0,1]$ be an accuracy function of a hypothesis parameterized by $\hat{\theta}\in\Theta$ evaluated on a dataset $\hat{\sD}\sim\gD$. When $\hat{\theta}$ is under a random perturbation $\upsilon$, with probability at least $1-\alpha$, we have:
\begin{equation}\label{eq:param_err_bound}
    \E_{\epsilon}[A_{\gD}(\hat{\theta}+\epsilon)] \geq \E_{\epsilon}[{A}_{\hat{\sD}}(\hat{\theta}+\epsilon)] -  \sqrt{\frac{\frac{\|\hat{\theta}\|^2}{\sigma^2} + \ln \frac{N}{\alpha}}{2(N-1)}},
\end{equation}
where $N$ is the size of $\hat{\sD}$ and $\epsilon \sim \N(0, \sigma^2I)$.
\end{restatable}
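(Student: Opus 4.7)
The plan is to recognise this statement as a direct instance of a McAllester-type PAC-Bayesian generalisation bound, under which the parametric smoothing noise $\epsilon\sim\N(0,\sigma^2 I)$ turns $f_{\hat{\theta}+\epsilon}$ into a Gibbs predictor whose weights are drawn from the posterior $Q=\N(\hat{\theta},\sigma^2 I)$ on $\R^d$. The accuracy $\E_{\epsilon}[A_{\hat{\sD}}(\hat{\theta}+\epsilon)]$ appearing on the right-hand side is then exactly the empirical Gibbs accuracy of $Q$ on $\hat{\sD}$, and $\E_{\epsilon}[A_{\gD}(\hat{\theta}+\epsilon)]$ is its population counterpart under the data distribution $\gD$.

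First, I would fix a data-independent prior $P=\N(0,\sigma^2 I)$ on $\R^d$. This is the critical modelling choice: PAC-Bayes requires the prior to be selected \emph{independently} of the evaluation sample $\hat{\sD}$, and centring it at the origin is precisely what causes $\|\hat{\theta}\|^2$ rather than any other norm to surface in the final bound. Since $\hat{\sD}\sim\gD$ is an i.i.d.\ sample of size $N$ and accuracy is a $[0,1]$-valued bounded functional of $(\theta,x,y)$, a McAllester–Maurer-style inequality applies: with probability at least $1-\alpha$ over the draw of $\hat{\sD}$, uniformly over all posteriors $Q$,
$$\E_{\theta\sim Q}[1-A_{\gD}(\theta)] \;\le\; \E_{\theta\sim Q}[1-A_{\hat{\sD}}(\theta)] \;+\; \sqrt{\frac{\KL(Q\,\|\,P)+\ln(N/\alpha)}{2(N-1)}}.$$

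Second, I would rewrite this as a lower bound on the clean-data Gibbs accuracy by pushing the ``$1-$'' across the inequality, and re-express the posterior expectation $\E_{\theta\sim Q}[\,\cdot\,]$ as $\E_\epsilon[\,\cdot\,]$ via the reparameterisation $\theta=\hat{\theta}+\epsilon$. This already produces the left- and right-hand sides of Inequality~\ref{eq:param_err_bound} exactly, leaving only the concentration term to identify.

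Third, I would plug in the closed-form KL between two isotropic Gaussians of common variance $\sigma^2$, namely $\KL(\N(\hat{\theta},\sigma^2 I)\,\|\,\N(0,\sigma^2 I)) = \|\hat{\theta}\|^2/(2\sigma^2)$, and match constants with the specific PAC-Bayes variant whose square-root numerator carries twice the KL (a standard bounded-loss refinement), recovering the stated $\|\hat{\theta}\|^2/\sigma^2$. The main obstacle is exactly this constant-matching step, since several McAllester-style inequalities differ by small multiplicative factors under the square root and one must select the variant consistent with the statement. A subtler point is that $\hat{\theta}$ is itself learned from $\sD_s\oplus\delta$; however, PAC-Bayes only demands independence between the \emph{prior} and the evaluation sample $\hat{\sD}$, and since $\hat{\sD}$ is a fresh draw from $\gD$ while $P$ is fixed at the zero-mean Gaussian, the hypothesis holds regardless of how $\hat{\theta}$ was produced.
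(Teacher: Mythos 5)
Your proposal is correct and takes essentially the same route as the paper's own proof: both invoke the McAllester/Langford--Shawe-Taylor PAC-Bayes bound with posterior $\N(\hat{\theta},\sigma^2 I)$ and data-independent prior $P=\N(0,\sigma^2 I)$, compute the Gaussian KL divergence $\|\hat{\theta}\|^2/(2\sigma^2)$, and convert error to accuracy. If anything, you are more attentive than the paper to the one loose point: the paper plugs the KL into a numerator of the form $\KL + \ln(N/\alpha)$, which yields $\|\hat{\theta}\|^2/(2\sigma^2)$ rather than the stated $\|\hat{\theta}\|^2/\sigma^2$ (a strictly stronger bound, so the statement still follows), whereas you explicitly flag the constant-matching needed to land exactly on the stated form.
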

\noindent
The corollary indicates that the expected generalization accuracy over the clean data distribution is lower bounded by the expected empirical accuracy minus a function of $\hat{\theta}$, $\sigma$ and $N$.
Although the corollary bounds the expectation rather than the $q$-th quantile, it connotes that a $f_{\hat{\theta}}$ constantly producing correct predictions on $\hat{\sD}$ also has a high generalization accuracy, which evinces the inferiority of the UEs resulting $\hat{\theta}$.
Furthermore, if the expectation of ${A}_{\hat{\sD}}(\hat{\theta}+\epsilon)$ is robust to varying $\sigma$, the expected generalization accuracy would grow with increasing $\sigma$.
For appropriately skewed accuracy distributions, this result implies that an $f_{\hat{\theta}}$ which endures large parametric noise can certify $(q, \eta)$-Learnability that generalizes better to $\gD$. 

\begin{mdframed}[backgroundcolor=grey!10,rightline=true,leftline=true,topline=true,bottomline=true,roundcorner=1mm,everyline=false,nobreak=false]
\noindent \textbf{Remark.~}
We defined the QPS function of accuracy scores obtained by a set of classifiers randomized from a surrogate classifier fit to UEs $\sD_s\oplus\delta$.
The QPS function returns the value in the $q$-th quantile of all accuracy scores. 
The output of the QPS function can be further upper bounded by Theorem~\ref{theorem:cert_learnability}.
The upper bound of the accuracy in the $\overline{q}$-th quantile is treated as the $(q, \eta)$-Learnability of $\sD_s\oplus\delta$.
The key message of the certified $(q, \eta)$-Learnability is that --- 
if an unauthorized classifier has its trained parameters in the certified parameter set $\hat{\Theta}:=\{\theta\ |\ \theta\sim\N(\hat{\theta}+\upsilon, \sigma^2I),\ \|\upsilon\| \leq \eta\}$, then its clean test accuracy will not exceed the $(q, \eta)$-Learnability, with probability at least $q$.
Moreover, if $\sD$ is hidden from the defender, a generalization $(q,\eta)$-Learnability can be certified by sampling a test set from $\gD$ by the defender.
\end{mdframed}

\section{Provably Unlearnable Examples}\label{sec:pue}
The core tasks of this section are 1) selecting surrogate classifiers that help certify a tighter $(q, \eta)$-Learnability and 2) generating PUEs to suppress $(q, \eta)$-Learnability against CAs while empirically tackling GAs.

\subsection{Desiderata for Surrogates and PUEs}
Recall that we rely on a surrogate classifier fit to the UEs to sample the accuracy scores $A_{\sD}(\hat{\theta}+\epsilon)$.
\textit{The aims of the surrogate are to 1) produce as tight as possible certified $(q, \eta)$-Learnability scores and 2) capture the distributional characteristics of the UEs it is trained on.}
Through parametric randomization, a plausible surrogate should discover as many as possible neighboring classifiers that have higher clean test accuracy.
The best case of unauthorized classifiers can thus have a higher chance of being sampled.
This means the noise added to the surrogate parameters should be large enough such that the probability measure of these potential best cases during sampling can be significant. 
Conversely, an immediate requirement for this type of surrogate classifier is that it should be able to tolerate large parametric noises while making predictions.
Furthermore, the second target can be fulfilled when the surrogate has a low classification error on the set of UEs.

As the surrogate for certification, $f_{\hat{\theta}}$ should exhibit a low classification error on $\sD_s\oplus\delta$, signifying its successful capture of the UE distribution. 
Second, the recovery attack reveals that the manifold of UEs can be close to that of clean samples. 
Similar to the proof in Appendix F of the paper by Cohen et al.~\cite{cohen2019certified}, augmenting $f_{\hat{\theta}}$ with parametric noise maximizes the log-likelihood of correct predictions. 
This augmentation can tighten the certified $(q, \eta)$-Learnability.

Suppose there is a tightly certified $(q, \eta)$-Learnability, the task of making PUEs is straightforward --- we will find better perturbations $\delta$ such that the surrogate $\hat{f}_{\theta}$ trained on $\sD_s\oplus\delta$ can produce a lower $l_{(q,\eta)}(\hat{\Theta}; \sD_s\oplus\delta)$.
At a high level, the key intuition behind crafting PUEs that can suppress its $(q, \eta)$-Learnability is described as follows: 
\begin{mdframed}[backgroundcolor=grey!10,rightline=true,leftline=true,topline=true,bottomline=true,roundcorner=1mm,everyline=false,nobreak=false]
A set of plausible UEs should lead to a trained classifier having not only a low clean test accuracy by itself but also low accuracy scores by the population of randomized classifiers in its vicinity.
\end{mdframed}
Thereafter, we will show in the next section that the design goals of the surrogate and UEs are attainable under a unified training framework.

\subsection{Random Weight Perturbation}
To meet the first goal, we seek certification surrogates that can generate more correct predictions through randomizations under the parametric noise.
Motivated by the findings in recovery attacks, such surrogates can be trained by using train-time random parametric noise as an augmentation.
There are two possible ways to train surrogates.
First, an offline surrogate can be trained on already made $\sD_s\oplus\delta$.
Otherwise, in an online setting, a surrogate can be trained together with $\delta$.

In the online settings, we can optimize the PAP noise $\delta$ with the surrogate under random weight perturbations.
It is hard to exactly minimize $l_{(q,\eta)}(\hat{\Theta}; \sD_s\oplus\delta)$. 
However, recall that when the ratio of small-valued $A_{\sD}(\theta+\epsilon)$ grows, a lower value can be obtained in the fixed $q$-th quantile of accuracy scores.
Therefore, PAP noises simultaneously updated with a surrogate over random weight perturbations may curtail the $(q, \eta)$-Learnability of the resulting UEs.
Given a dataset $\sD_s$, we train additive error-minimizing noises and a surrogate classifier ${f}_{\hat{\theta}}$ under random weight perturbations. 
The objective of the optimization problem can be formulated as follows:
\begin{equation}\label{eq:minmin_train}
\begin{aligned}
    \min_{\hat{\theta}}&\  \E_{(x,y\sim\sD_s)}\ \min_{\delta'\subseteq\delta}\ \E_{\epsilon\sim\pi(0)}\ \gL({f}_{\hat{\theta}+\epsilon}(x+\delta'), y),\\
    &\ s.t.\ \forall\ \delta'\subseteq\delta,\ \|\delta'\| \leq \xi,
\end{aligned}
\end{equation} 
where $\delta'$ is the corresponding noise added to $x$ and $\pi(0)$ is a train-time noise distribution with a mean of $0$.
In practice, we sample a set of $U$ random weight perturbations with a zero-mean Gaussian distribution, to train $\delta$ and $\hat{\theta}$.
In the offline setting, we omit the inner minimization objective on $\delta$ and only optimize $\hat{\theta}$.
The training on the dataset $\sD_s$ is thus to minimize the following empirical loss:
\begin{equation}\label{eq:empirical_minmin_train}
    \begin{aligned}
        \min_{{\hat{\theta}}} \frac{1}{N}\sum_{j=1}^{N} \min_{\delta_j, \|\delta_j\|\leq\xi} \frac{1}{U}\sum_{i=1}^{U} \gL[f_{\hat{\theta}+\epsilon_i}(x_{j}+\delta_j), y_{j}],
    \end{aligned}
\end{equation}
where $(x_{j}, y_{j})$ is sampled from $\sD_s$ and $\epsilon_i\sim\N(0, {\sigma'}^2I)$ is the $i$-th noise added to $\hat{\theta}$. 
In each step of training, we gradually increase the value of $\sigma'$ with a step size of $s$ to update $\hat{\theta}$ multiple times such that the classifier can better converge.
The optimized perturbations $\delta$ are then superposed on $\sD_s$ to make it unlearnable.
The detailed algorithm for making the PUEs is depicted in Algorithm~\ref{alg:delta_opt}.
More details of the optimization process are in Appendix~\ref{append:algorithm_details}.

In the next section, we will evaluate the certified $(q, \eta)$-learnability and its tightness for PUEs, as well as other state-of-the-art UEs. 
Moreover, we will compare the empirical test accuracy and robustness of PUEs with those of competitors to demonstrate the effectiveness of PUEs against GAs.

\begin{algorithm}[t]
\footnotesize
\caption{Training of $f_{\hat{\theta}}$ and $\delta$.}\label{alg:delta_opt}
\KwIn{Surrogate classifier $\hat{f}_{\theta}$ with parameters $\theta$, training dataset $\sD_s$, smoothing distribution $\N(0, \sigma^2I)$, $U_{train}$, $U_{perturb}$, noise STD cap $S$, step size $s$, training batch number $N$, train step number $M$, stop error rate $\tau$, validation set $\sD'_s$.}
\KwOut{PUE noise $\delta$ and surrogate parameters $\hat{\theta}$.}
\textbf{Initialize:} \\
$\delta\ \gets\ [0]^{|\sD_s|\times d}$ \\
$ error \gets 1e^{9}$  \\
\While{$error \geq \tau$}
{
    \For {$i \in 1,...,M$}
    {
        Load a mini-batch $(x_i, y_i)$ from $\sD_s$ \\
        $l\ \gets\ 0$ \\
        $\sigma'\ \gets\ 0$ \\
        \While{$\sigma' \leq S$}
        {   
            $\sigma'\ \gets\ \sigma'+s$ \\
            \For {$j \in 1,...,U_{train}$}
            {
                $\epsilon_j \sim\ \N(0, \sigma'^2I)$ \\
                $l\ +=\ \gL[f_{{\theta}+\epsilon_j}(x_{i}+\delta), y_{i}]$  \\
            }
            $\theta\ \gets\ \textsc{Train\_step}(\theta, l/U_{train})$ \\
        }
    }
    \For{$i \in 1,...,N$}
    {
        Load a mini-batch $(x_i, y_i)$ from $\sD_s$ \\
        $l\ \gets\ 0$ \\
        $\sigma'\ \gets\ 0$ \\
        \While{$\sigma' \leq S$}
        {   
            $\sigma'\ \gets\ \sigma'+s$ \\
            \For{$j \in 1,...,U_{perturb}$}
            {
                $\epsilon_j \sim\ \N(0, \sigma'^2I)$ \\
                $l\ +=\ \gL[f_{{\theta}+\epsilon_j}(x_{i}+\delta), y_{i}]$  \\
            }
            $\delta\ \gets\ \textsc{Opt\_step}(x_{i}, \delta, l/U_{perturb})$ \\
        }
    }
    $error\ \gets\  \textsc{Eval}(\theta, \sD'_s+\delta)$ \\
}
$\hat{\theta}\ \gets\ \theta$\\
\KwOut{$\delta$, $\hat{\theta}$}
\end{algorithm}

\begin{mdframed}[backgroundcolor=grey!10,rightline=true,leftline=true,topline=true,bottomline=true,roundcorner=1mm,everyline=false,nobreak=false]  
\noindent \textbf{Remark.~}
PUEs aim to increase the classification error of a set of randomized classifiers. 
This ensures that the PUEs are effective towards not only the surrogate but also other classifiers who are on the support of the smoothing noise distribution. 
In contrast, the surrogate classifier trained on PUEs should stay immune to the weight perturbations caused by parametric smoothing such that the decrease in the certified learnability can be attributed to the effect of PUEs rather than the weight perturbations.
We solve this two-layer problem as a task of bi-level minimization over the data distribution and random weight perturbations. 
\end{mdframed}

\section{Experiments}\label{sec:exp}
We present the experimental results of the certified $(q, \eta)$-Learnability and empirical performance of PUEs against CAs and GAs.
Through the experiments, we aim to answer three questions.
\begin{itemize}[leftmargin=*]
    \item Can random weight perturbations increase the tightness of certified $(q, \eta)$-Learnability? 
    \item Is PUE an effective way of producing unlearnable datasets suppressing certified $(q, \eta)$-Learnability?
    \item How robust is PUE against GAs who escape from the certified parameter set $\hat{\Theta}$?
\end{itemize}
Accordingly, we conduct experiments to measure both the certified $(q, \eta)$-Learnability and the empirical performance of PUE and its baselines. 

\noindent \textbf{Baselines.~}
We consider both \textit{online surrogates} and \textit{offline surrogates} in the certification against CAs.
We compare PUE with a tailored version of Error-Minimizing Noise (EMN)~\cite{huangunlearnable2021} in their certified $(q, \eta)$-Learnability to show the effectiveness of our online surrogate and the robustness of PUE.
Moreover, we train offline surrogates on off-the-shelf PUE, EMN, and OPS~\cite{wu2023one} to compare their certified $(q, \eta)$-Learnability.
When it comes to GAs, we conduct a series of empirical comparisons with EMN and OPS to show their robustness and clean test accuracy scores in cases where unauthorized classifiers escape from the certified parameter set $\hat{\Theta}$.
Though our method does not restrict the type of noise applied to make data unlearnable (\ie class-wise or sample-wise), we found that PUE surrogates trained with class-wise noises can converge rapidly while those trained by sample-wise noises do not converge very well.
Therefore, in this section, we focus on class-wise noise in all experiments and comparisons.

\noindent \textbf{Data and models.~} 
In our experiments, we adopt CIFAR-10, CIFAR100, and ImageNet with $100$ randomly selected categories as the datasets in our evaluation.
ResNet-18 is used as the surrogate model to generate UEs in all experiments.
During certification experiments, we use ResNet-18 as the architecture of surrogates.
We certify against $5000$ ImageNet test set samples and the entire test datasets of CIFAR-10/CIFAR100.
In empirical evaluations, we employ ResNet-18, ResNet-50, and DenseNet-121 as the architectures for unauthorized classifiers.
We test the empirical robustness of PUE, EMN, and OPS on ResNet-18 using recovery attacks.
The test accuracy scores of PUE, EMN, and OPS are measured by training ResNet-18, ResNet-50, and DenseNet-121 on them, respectively.

\noindent \textbf{Training setups.~}
In each iteration of PUE optimization and online/offline surrogate training, the standard deviation (STD) of the noises added to the parameters is gradually increased from $0$ with a step size of $0.05$ until the full noise level is reached. 
Furthermore, we start adding the weight perturbations after some warm-up steps to stabilize the training.
Generally, we start perturbing weights after the classification error on UEs decreases below $50\%$. 
When generating UEs and fitting online surrogates, we set $M=10$ on CIFAR10, $M=20$ on CIFAR100, and $M=100$ on ImageNet.
No random data augmentation is used in the surrogate training processes.
In empirical experiments, we follow the previous training routines~\cite{huangunlearnable2021,wu2023one} and set the epoch to $60$ when training classifiers or launching recovery attacks on CIFAR10. 
For the training and recovery attacks on CIFAR100 and ImageNet, the epoch number is set to $100$. 
The weight decay rate is $5\times 10^{-4}$ on CIFAR10 and is $5\times 10^{-5}$ for CIFAR100/ImageNet.
We apply an SGD with a momentum of $0.9$ and a learning rate of $0.1$ in all classifier/surrogate training runs. 

\noindent\textbf{Certification setups.~}
We select $0.25$ as the STD of the train-time Gaussian noise to obtain the surrogates.
During the certification process, we set $q=0.9$ and $\sigma\in\{0.25, 0.8\}$ for all the surrogates.
There are reasons behind this setting.
First, $\sigma=0.25$ matches with that of the train-time noise, and can certify the highest $(q, \eta)$-Learnability at a fixed $\eta$ (see Section~\ref{subsec:ablation_study}).
Second, $\sigma=0.8$ is sufficient for certifying a parametric radius $\eta=1.0$ within which the recovery attack can restore most of the clean accuracy (refer to Figure~\ref{fig:recovery_attack_common} in Section~\ref{subsec:pue_robustness}).
For each surrogate classifier, we sample $1000$ classifiers around it to compute the confidence interval of the QPS function under a confidence level of $0.99$.
For each surrogate, we certify under various values of $\eta$ and record the certified learnability scores under each $\eta$.
To compute the generalization $(q, \eta)$-Learnability, one can simply adding $\sqrt{\frac{1}{N}\log(\frac{2n}{\beta})}$ to the certified learnability scores given $n$, $N$ and $\beta$.
For instance, for $n=1000$ and $\beta=0.01$, sampling a dataset with size $N=5000$ from the domain $\gD$ as the test set to compute the generalization $(q, \eta)$-Learnability will attract a universal increase of $0.05$ to the certified learnability scores in the tables.
Each run of certification on CIFAR10/CIFAR100 takes around one GPU hour on Nvidia Tesla P100. 
Certifying an ImageNet classifier requires 26.4 GPU hours.
We apply offsets to the certified results to mitigate the impact of accuracy differences in the surrogates.
More details of hyper-parameters and certification settings can be found in Appendix~\ref{append:ex_setup}.



\subsection{Learnability Certification}\label{subsec:cert}
 In this section, we study the certified $(q, \eta)$-Learnability of UEs for CAs and gauge the tightness of the certification using SAs (\ie recovery attacks).

\begin{table}[t]
\caption{Certified $(q, \eta)$-Learnability under Different Training Methods ($\%$, $\sigma=0.25$)}
\label{table:effectiveness_of_training_25}
\centering
\resizebox{.99\linewidth}{!}{%
\begin{tabular}{ccccccccccc}
\toprule
\multirow{2}{*}{Data} & \multirow{2}{*}{Method} & \multicolumn{7}{c}{$\eta\times100$} \\
\cmidrule(r){3-11}
                          &        &  0.1  &  0.5  &  1.0  &  5.0  &  10.0 & 15.0  & 20.0  & 25.0  & 30.0 \\
\midrule
\multirow{2}{*}{CIFAR10}  & PUE-B  & \textbf{10.62} & \textbf{10.67} & \textbf{10.71} & \textbf{11.07} & \textbf{11.86} & \textbf{12.50} & \textbf{13.20} & \textbf{14.67} & \textbf{15.75} \\
                          & EMN    &  5.69 &  5.70 &  5.74 &  5.91 &  6.24 &  6.43 &  6.96 & 8.61 & 10.27 \\
\midrule
\multirow{2}{*}{CIFAR100} & PUE-B  &  \textbf{1.32} &  \textbf{1.32} &  \textbf{1.33} &  \textbf{1.37} &  \textbf{1.41} &  \textbf{1.47} &  \textbf{1.53} &  \textbf{1.59} &  \textbf{1.68} \\
                          & EMN    &  0.43 &  0.43 &  0.44 &  0.47 &  0.52 &  0.59 &  0.70 &  0.77 &  0.89 \\      
\midrule
\multirow{2}{*}{ImageNet} & PUE-B  &  \textbf{1.46} &  \textbf{1.46} &  \textbf{1.48} &  \textbf{1.54} &  \textbf{1.63} &  \textbf{1.79} &  \textbf{1.85} &  \textbf{1.97} &  \textbf{2.13} \\
                          & EMN    &  1.34 &  1.34 &  1.37 &  1.41 &  1.45 &  1.49 &  1.54 &  1.58 &  1.67 \\
\bottomrule
\end{tabular}
}
\end{table}

\begin{table}[t]
\caption{Certified $(q, \eta)$-Learnability under Different Training Methods ($\%$, $\sigma=0.8$)}
\label{table:effectiveness_of_training_80}
\centering
\resizebox{.99\linewidth}{!}{%
\begin{tabular}{cccccccccccc}
\toprule
\multirow{2}{*}{Data} & \multirow{2}{*}{Method} & \multicolumn{7}{c}{$\eta$} \\
\cmidrule(r){3-12}
                          &        &  0.1  &  0.2  &  0.3  &  0.4  &  0.5  &  0.6  &  0.7  &  0.8  &  0.9  &  1.0 \\
\midrule
\multirow{2}{*}{CIFAR10}  & PUE-B  & \textbf{10.68} & \textbf{10.86} & \textbf{11.18} & \textbf{11.37} & \textbf{11.52} & \textbf{12.00} & \textbf{12.71} & \textbf{12.87} & \textbf{13.72} & \textbf{15.17}\\
                          & EMN    &  5.86 &  6.11 &  6.28 &  6.52 &  7.00 &  7.27 &  7.36 &  7.87 &  8.38 &  9.02\\
\midrule
\multirow{2}{*}{CIFAR100} & PUE-B  &  \textbf{1.13} &  \textbf{1.15} &  \textbf{1.16} &  \textbf{1.20} &  \textbf{1.23} &  \textbf{1.27} &  \textbf{1.31} &  \textbf{1.36} &  \textbf{1.51} &  \textbf{1.56}\\
                          & EMN    &  0.47 &  0.48 &  0.51 &  0.55 &  0.59 &  0.63 &  0.66 &  0.67 &  0.69 &  0.70\\    
\midrule
\multirow{2}{*}{ImageNet} & PUE-B  &  \textbf{1.19} &  \textbf{1.24} &  \textbf{1.26} &  \textbf{1.28} &  \textbf{1.32} &  \textbf{1.37} &  \textbf{1.45} &  \textbf{1.50} &  \textbf{1.61} &  \textbf{1.80}\\
                          & EMN    &  1.12 &  1.14 &  1.16 &  1.20 &  1.25 &  1.29 &  1.40 &  1.40 &  1.48 &  1.48 \\
\bottomrule
\end{tabular}
}
\end{table}

\begin{table}[t]
\caption{Certified $(q, \eta)$-Learnability under Different PAP Noises ($\%$, $\sigma=0.25$, online)}
\label{table:effectiveness_of_noise_25}
\centering
\resizebox{.99\linewidth}{!}{%
\begin{tabular}{ccccccccccc}
\toprule
\multirow{2}{*}{Data} & \multirow{2}{*}{Method} & \multicolumn{7}{c}{$\eta\times100$} \\
\cmidrule(r){3-11}
                      &               &  0.1  &  0.5  &  1.0  &  5.0  &  10.0 & 15.0  & 20.0  & 25.0  & 30.0 \\
\midrule
\multirow{3}{*}{CIFAR10}  & PUE-10    & \textbf{10.24} & \textbf{10.29} & \textbf{10.31} & \textbf{10.69} & \textbf{11.14} & \textbf{11.82} & \textbf{12.29} & \textbf{13.12} & \textbf{13.59} \\
                          & PUE-1     & 10.86 & 10.97 & 11.04 & 11.62 & 12.12 & 12.64 & 13.35 & 14.22 & 14.66 \\
                          & PUE-B     & 10.62 & 10.67 & 10.71 & 11.07 & 11.86 & 12.50 & 13.20 & 14.67 & 15.75 \\
\midrule
\multirow{3}{*}{CIFAR100} & PUE-10    &  \textbf{1.29} &  \textbf{1.29} &  \textbf{1.31} &  \textbf{1.35} &  \textbf{1.41} &  \textbf{1.43} &  \textbf{1.46} &  \textbf{1.51} &  \textbf{1.65} \\
                          & PUE-1     &  1.35 &  1.36 &  1.36 &  1.42 &  1.48 &  1.53 &  1.57 &  1.69 &  1.87 \\
                          & PUE-B     &  1.32 &  1.32 &  1.33 &  1.37 &  \textbf{1.41} &  1.47 &  1.53 &  1.59 &  1.68 \\
\midrule
\multirow{3}{*}{ImageNet} & PUE-10    &  \textbf{1.45} &  \textbf{1.45} &  \textbf{1.45} &  \textbf{1.50} &  \textbf{1.61} &  \textbf{1.68} &  \textbf{1.76} &  \textbf{1.84} &  \textbf{1.95} \\
                          & PUE-1     &  1.58 &  1.58 &  1.58 &  1.67 &  1.73 &  1.81 &  1.95 &  2.00 &  2.19 \\
                          & PUE-B     &  1.46 &  1.46 &  1.48 &  1.54 &  1.63 &  1.79 &  1.85 &  1.97 &  2.13 \\
\bottomrule
\end{tabular}
}
\end{table}

\begin{table}[t]
\caption{Certified $(q, \eta)$-Learnability under Different PAP Noises ($\%$, $\sigma=0.8$, online)}
\label{table:effectiveness_of_noise_80}
\centering
\resizebox{.99\linewidth}{!}{%
\begin{tabular}{cccccccccccc}
\toprule
\multirow{2}{*}{Data} & \multirow{2}{*}{Method} & \multicolumn{7}{c}{$\eta$} \\
\cmidrule(r){3-12}
                          &           &  0.1  &  0.2  &  0.3  &  0.4  &  0.5  &  0.6  &  0.7  &  0.8  &  0.9  &  1.0 \\
\midrule
\multirow{3}{*}{CIFAR10}  & PUE-10    & \textbf{10.10} & \textbf{10.24} & \textbf{10.42} & \textbf{10.80} & \textbf{11.10} & \textbf{11.57} & \textbf{11.89} & \textbf{12.45} & \textbf{12.81} & 13.72 \\
                          & PUE-1     & 10.60 & 10.73 & 11.07 & 11.21 & 11.38 & 11.67 & 12.04 & 12.71 & 13.00 & \textbf{13.59} \\
                          & PUE-B     & 10.68 & 10.86 & 11.18 & 11.37 & 11.52 & 12.00 & 12.71 & 12.87 & 13.72 & 15.17\\
\midrule
\multirow{3}{*}{CIFAR100} & PUE-10    &  \textbf{1.11} &  \textbf{1.14} &  \textbf{1.16} &  \textbf{1.20} &  \textbf{1.23} &  \textbf{1.25} &  \textbf{1.27} &  \textbf{1.35} &  1.48 &  \textbf{1.52} \\
                          & PUE-1     &  1.13 &  1.15 &  1.18 &  1.21 &  1.24 &  1.27 &  1.31 &  \textbf{1.35} &  \textbf{1.44} &  1.60\\
                          & PUE-B     &  1.13 &  1.15 &  \textbf{1.16} &  \textbf{1.20} &  1.23 &  1.27 &  1.31 &  1.36 &  1.51 &  1.56\\
\midrule
\multirow{3}{*}{ImageNet} & PUE-10    &  \textbf{1.17} &  \textbf{1.19} &  \textbf{1.24} &  \textbf{1.26} &  \textbf{1.28} &  \textbf{1.32} &  \textbf{1.37} &  \textbf{1.45} &  \textbf{1.54} &  \textbf{1.61}\\
                          & PUE-1     &  1.22 &  1.24 &  1.30 &  1.35 &  1.37 &  1.41 &  1.45 &  1.52 &  \textbf{1.54} &  1.68\\
                          & PUE-B     &  1.19 &  1.24 &  1.26 &  1.28 &  1.32 &  1.37 &  1.45 &  1.50 &  1.61 &  1.80\\
\bottomrule
\end{tabular}
}
\end{table}

\begin{table}[t]
\caption{Certified $(q, \eta)$-Learnability under Different PAP Noises ($\%$, $\sigma=0.25$, offline)}
\label{table:effectiveness_of_noise_25_offline}
\centering
\resizebox{.99\linewidth}{!}{%
\begin{tabular}{ccccccccccc}
\toprule
\multirow{2}{*}{Data} & \multirow{2}{*}{Method} & \multicolumn{7}{c}{$\eta\times100$} \\
\cmidrule(r){3-11}
                      &               &  0.1  &  0.5  &  1.0  &  5.0  &  10.0 & 15.0  & 20.0  & 25.0  & 30.0  \\
\midrule
\multirow{3}{*}{CIFAR10}  & PUE-10    & \textbf{10.57} & \textbf{10.58} & \textbf{10.58} & \textbf{10.83} & \textbf{11.10} & \textbf{11.39} & \textbf{11.68} & \textbf{12.22} & \textbf{13.20} \\
                          & EMN       & 11.20 & 11.26 & 11.28 & 11.48 & 11.76 & 12.23 & 12.55 & 13.01 & 13.43 \\
                          & OPS       & 10.59 & 10.61 & 10.64 & 10.96 & 11.37 & 11.73 & 11.98 & 12.54 & 13.38 \\
\midrule
\multirow{2}{*}{CIFAR100} & PUE-10    &  \textbf{1.13} &  \textbf{1.13} &  \textbf{1.13} &  \textbf{1.16} &  \textbf{1.21} &  \textbf{1.27} &  \textbf{1.31} &  \textbf{1.35} &  \textbf{1.47} \\
                          & EMN       &  1.16 &  1.16 &  1.17 &  1.20 &  1.25 &  1.31 &  1.37 &  1.43 &  1.60 \\
\bottomrule
\end{tabular}
}
\end{table}

\begin{table}[t]
\caption{Certified $(q, \eta)$-Learnability under Different PAP Noises ($\%$, $\sigma=0.8$, offline)}
\label{table:effectiveness_of_noise_80_offline}
\centering
\resizebox{.99\linewidth}{!}{%
\begin{tabular}{cccccccccccc}
\toprule
\multirow{2}{*}{Data}     & \multirow{2}{*}{Method} & \multicolumn{7}{c}{$\eta$} \\
\cmidrule(r){3-12}
                          &           &  0.1  &  0.2  &  0.3  &  0.4  &  0.5  &  0.6  &  0.7  &  0.8  &  0.9  &  1.0  \\
\midrule
\multirow{3}{*}{CIFAR10}  & PUE-10    & \textbf{10.65} & 10.86 & 11.06 & \textbf{11.33} & \textbf{11.52} & \textbf{11.64} & \textbf{11.97} & \textbf{12.17} & \textbf{12.69} & \textbf{13.43} \\
                          & EMN       & 11.56 & 11.74 & 11.96 & 12.36 & 12.48 & 12.90 & 13.15 & 13.58 & 14.02 & 14.33 \\
                          & OPS       & \textbf{10.65} & \textbf{10.83} & \textbf{10.98} & 11.36 & 11.55 & 11.72 & 12.02 & 12.24 & 13.59 & 13.65 \\
\midrule
\multirow{2}{*}{CIFAR100} & PUE-10    &  \textbf{1.13} &  \textbf{1.15} &  \textbf{1.17} &  \textbf{1.20} &  \textbf{1.25} &  \textbf{1.30} &  1.36 &  \textbf{1.38} &  \textbf{1.39} &  \textbf{1.42} \\
                          & EMN       &  1.14 &  1.17 &  1.22 &  1.26 &  1.29 &  1.34 &  \textbf{1.34} &  \textbf{1.38} &  1.43 &  1.46 \\
\bottomrule
\end{tabular}
}
\end{table}

\begin{figure}[t]
     \centering
     \begin{subfigure}
         \centering
         \includegraphics[width=.51\columnwidth]{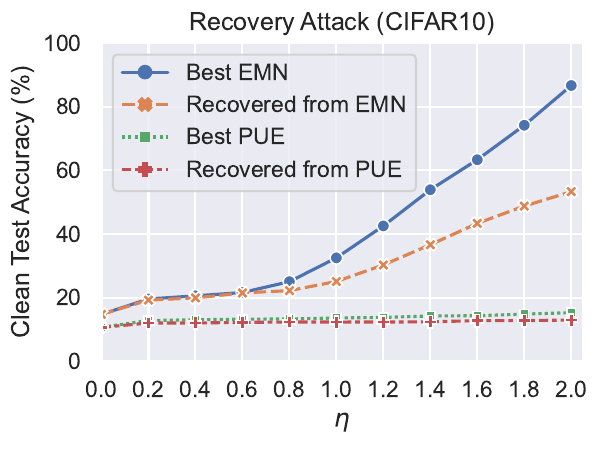}    
     \end{subfigure}\hspace{-3mm}%
     \begin{subfigure}
         \centering
         \includegraphics[width=0.51\columnwidth]{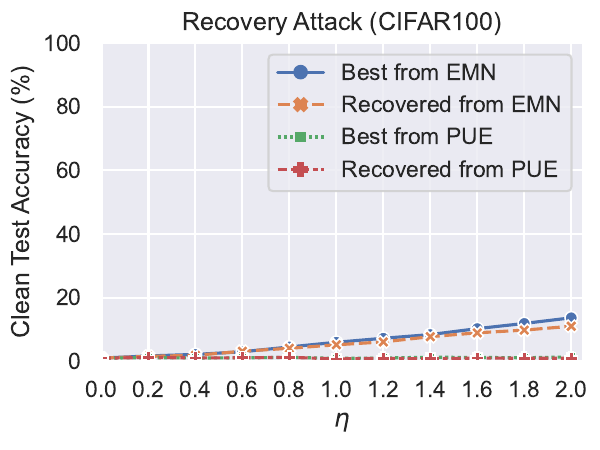}
     \end{subfigure}
\caption{The clean test accuracy scores of surrogate classifiers gauged on clean CIFAR10 (left) and CIFAR100 (right) test set after recovery attacks. The best accuracy scores approximate the True Learnability of parameters inside the hypersphere centered at $\hat{\theta}$ with a radius of $\eta$.}
\label{fig:recovery_attack}
\end{figure}

\noindent\textbf{Certification results.~}
We compare certified learnability scores from two aspects.
We first compare the certified learnability obtained based on surrogates trained on the same datasets but with different training strategies to show the effectiveness of random weight perturbations.
Second, we compare the certified learnability of different PAP noises by using the same random weight perturbing strategy to train surrogates.
This comparison aims to reveal the performance of PUE in reducing the certified learnability.

To make the comparisons, we introduce a baseline surrogate, PUE-B, which employs random weight perturbations in the training of the surrogate parameters but does not perturb the weights when optimizing the PAP noise. 
Therefore, PUE-B differs from EMN only in the training method of the surrogate and can be used as the baseline in the evaluation of PUEs.
On the other hand, the surrogate trained without random weight perturbation is denoted as EMN.
The comparison is made under two different levels of smoothing noise ($\sigma=0.25$ and $\sigma=0.8$).
The results are presented in Table~\ref{table:effectiveness_of_training_25} and Table~\ref{table:effectiveness_of_training_80}.
According to Table~\ref{table:effectiveness_of_training_25}, compared to EMN, PUE-B can be certified at a higher $(0.9, \eta)$-Learnability across all datasets. 
It can also be observed that the gain in the $(0.9, \eta)$-Learnability of PUE-B increases when the certified radius rises. 
On CIFAR100 and ImageNet, the gain becomes more obvious.
The observation supports the effectiveness of the random weight perturbation strategy in reducing the gap between the $(q, \eta)$-Learnability and the True Learnability.

Next, we compare the certified learnability of online surrogates trained by random weight perturbations on different UEs.
We generated two versions of PUEs (\ie PUE-1 and PUE-10) by setting the value of $U_{perturb}$ to $1$ and $10$, respectively.
The certification results based on the corresponding online surrogates are recorded in Table~\ref{table:effectiveness_of_noise_25} and Table~\ref{table:effectiveness_of_noise_80}.
Generally, in all the tables, PUE-10 achieves the lowest $(0.9, \eta)$-Learnability.
When the smoothing noise level matches the noise level of random weight perturbation, PUE-10 can diminish the $(0.9, \eta)$-Learnability on all datasets.
However, when the parametric smoothing noise gets larger to $0.8$, PUE-10 has the best results at most of the certified radii across the three datasets but not for some cases at large $\eta$ values.
This should be the consequence of the large smoothing noise used, which will be verified by the ablation study in Section~\ref{subsec:ablation_study}.
Saliently, all of the results on PUEs outperform that of PUE-B, meaning that optimizing the PAP noise over random weight perturbations can effectively suppress the $(0.9, \eta)$-Learnability. 

At last, we compare certification results based on offline surrogates trained on CIFAR10 and CIFAR100 with PAP noises.
We find surrogates trained on CIFAR100 OPS cannot be augmented properly within the same training epochs used for PUE and EMN.
We thus only compare with OPS on CIFAR10.
The certified learnability scores are presented in Table~\ref{table:effectiveness_of_noise_25_offline} and Table~\ref{table:effectiveness_of_noise_80_offline}. 
We have similar observations on the certification results based on online surrogates and offline surrogates. 
According to the results, PUEs certified based on offline surrogates outperform both EMN and OPS when $\sigma=0.25$.
When $\sigma=0.8$, OPS performs better on CIFAR10 at small $\eta$.
Similarly, EMN obtains slightly lower certified learnability scores at $\eta=0.7$.
We suspect it is a consequence of the large smoothing noise, which leads to a loose certification.
Additionally, the advantage of OPS can be a result of insufficient steps of training since classifiers usually require more epochs to converge on OPS.
However, PUE still outperforms OPS when $\eta$ exceeds $0.4$.


\noindent\textbf{Tightness of the certification.~}
It is important to make sense of the gap between certified learnability and True Learnability.  
Since True Learnability is impossible to calculate, we gauge it by launching recovery attacks against surrogates used in the certification.
Specifically, we use projected SGD to finetune the online surrogates of PUE-10 and EMN, respectively, on clean samples and clip their weights to make the after-trained weights stay within an $\ell_2$ norm of $\eta$ from the original weights.
We use $20\%$ of the CIFAR10/CIFAR100 training set in the fine-tuning to obtain the generalized clean test accuracy and use the CIFAR10/CIFAR100 test set to approximate the best accuracy. 
The learning rate of projected SGD is set to $0.01$ in all attacks.
The clean test accuracy scores recovered by the attack under different $\eta$ are plotted in Figure~\ref{fig:recovery_attack}.
On the CIFAR10 EMN surrogate, the generalized test accuracy can be recovered from below $15\%$ to near $60\%$ within an $\eta$ of $2.0$ while the best possible accuracy is over $85\%$.
In contrast, the recovery attack against the CIFAR10 PUE surrogate can only restore the accuracy from $10.68\%$ to $12.98\%$ and, in the best case, $15.31\%$.
The best accuracy is close to the certified learnability scores, meaning tighter learnability is certified using the PUE surrogate.

Compared to CIFAR10, CIFAR100 surrogates demonstrate lower best accuracy and recovered accuracy, which could be imputed to two possible reasons.
First, CIFAR100 is by nature more difficult to learn than CIFAR10.
Second, the CIFAR100 surrogate is trained by more iterations on the UEs and is harder to recover.
However, it can be seen that the PUE surrogate still outperforms the EMN one on the tightness of certification.
The EMN surrogate has the best accuracy ranging from $1.03\%$ to $13.69\%$ when $\eta$ increases, while the accuracy of the PUE surrogate can only be recovered to $1.40\%$ in the best case.
The comparison shows the superiority of the PUE surrogate in certifying tighter $(q, \eta)$-Learnability.
We also examine the impact of surrogate architecture towards the tightness of the certified $(q, \eta)$-Learnability.
Please refer to Appendix~\ref{append:ex_setup} for details. 
In addition, we check the validity of the certification under the stochasticity of different training runs to show that classifiers trained by adversaries can still fall inside the surrogate-certified $\hat{\Theta}$.



\begin{figure}[t]
     \centering
     \begin{subfigure}
         \centering
         \includegraphics[width=.51\columnwidth]{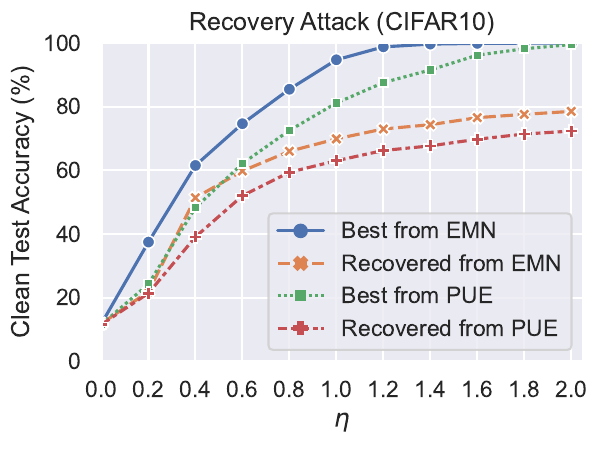}    
     \end{subfigure}\hspace{-3mm}%
     \begin{subfigure}
         \centering
         \includegraphics[width=0.51\columnwidth]{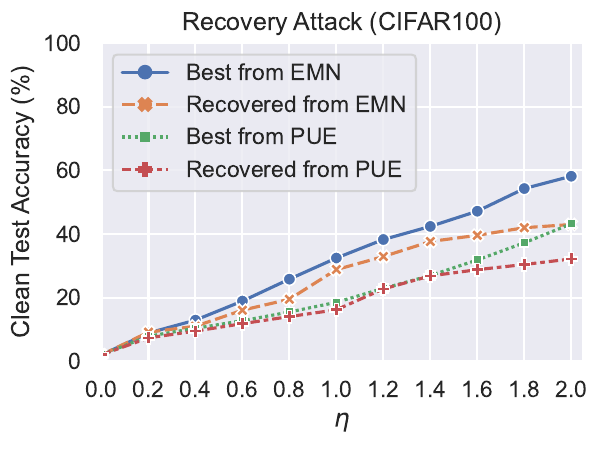}
     \end{subfigure}
\caption{The robustness of PUE and EMN against recovery attacks.}
\label{fig:recovery_attack_common}
\end{figure}

\noindent\textbf{Validity of the certification.~}
Recall that the validity of certified $(q, \eta)$-learnability is characterized by a certified parameter set. 
However, the training of classifiers faces stochasticity in weight initialization, mini-batch loading, stochastic gradient descent, etc. 
This implies that training on PUEs could lead to dissimilar model weights across different runs, so the PUEs that were certified by the surrogate classifier may not remain valid in subsequent training runs conducted by adversaries.
To investigate the validity of PUEs for adversaries using mainstream training techniques, we train 10 ResNet-18 models on CIFAR10 PUEs with random weight initialization, batch loading orders, and data augmentations to check how diverse their trained weights can be. 
After training, we draw ten layers (nine convolution layers and one linear layer) and visualize the distribution of the converged weights. 
The visualization can be found along with our source code at \href{https://github.com/Provably-Unlearnable-Examples/PUE}{https://github.com/Provably-Unlearnable-Examples/PUE}.
Furthermore, we calculate the pairwise difference for each parameter across the 10 classifiers and record the mean and STD of all the differences. 
According to the results, the weights from the 10 runs share similarities in their distributions, with a mean parameter difference of $-4.92\times 10^{-6}$ and an STD of $0.01$.
Note that the certified parameter set $\hat{\Theta}$ is actually an infinite Gaussian mixture.
For example, with $\sigma=0.25$ and $\eta=1.0$, $\hat{\Theta}$ has sufficiently large probability mass within $[\hat{\theta}-0.5, \hat{\theta}+0.5]$ for $\hat{\theta}$ with a dimensionality of $11$M (\ie the rough parameter count of ResNet-18) and beyond, which suggests that the certified parameter set can cover classifier weights from different training runs with stochasticity.
Therefore, $\hat{\Theta}$ certified by a surrogate $\hat{\theta}$ can effectively capture classifiers trained separately by adversaries following certain standard training procedures.

\subsection{Robustness of PUE against GAs}\label{subsec:pue_robustness} 
In this section, we aim to understand how robust PUEs are when the classifier trained on them is outside of the certified parameter set $\hat{\Theta}$ or evades the certified learnability scores with a probability of $1-q$.
This setting aligns with real-world cases in which unauthorized classifiers are distinct from the surrogates used in the certification.

We particularly focus on the robustness of UEs by showing some hardness results of recovery attacks against classifiers trained on PUE, EMN, and OPS, following common training routines.
Furthermore, we evaluate the performance of PUE in defending against various training approaches to verify its practicality.
Specifically, we compare the empirical clean test accuracy of unauthorized classifiers trained on PUE, EMN and OPS using four representative training strategies (\ie MixUp, CutOut, Fast Autoaugment, and Adversarial Training).
The results are deferred to Appendix~\ref{append:exp:emprirical_results}.

\begin{figure}[t]
     \centering
     \begin{subfigure}
         \centering
         \includegraphics[width=.51\columnwidth]{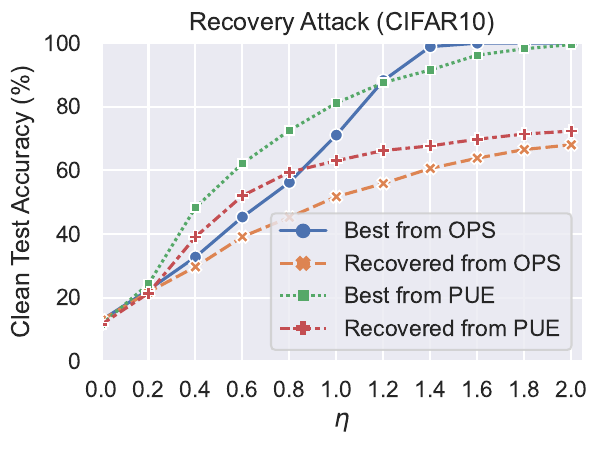}    
     \end{subfigure}\hspace{-3mm}%
     \begin{subfigure}
         \centering
         \includegraphics[width=0.51\columnwidth]{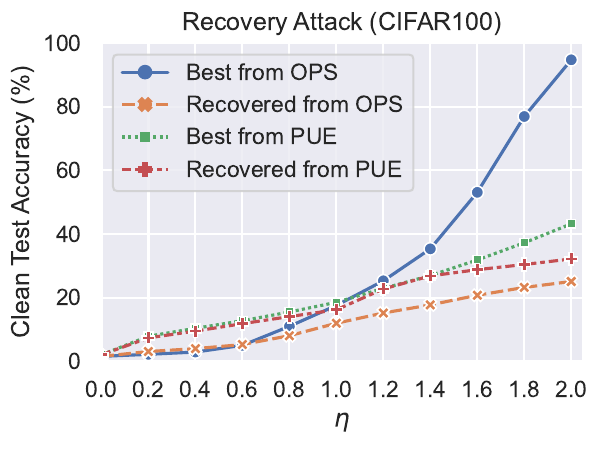}
     \end{subfigure}
\caption{The robustness of PUE and OPS against recovery attacks.}
\label{fig:recovery_attack_common_ops}
\end{figure}

\begin{figure*}[t]
     \centering
     \begin{subfigure}
         \centering
         \includegraphics[width=0.28\textwidth]{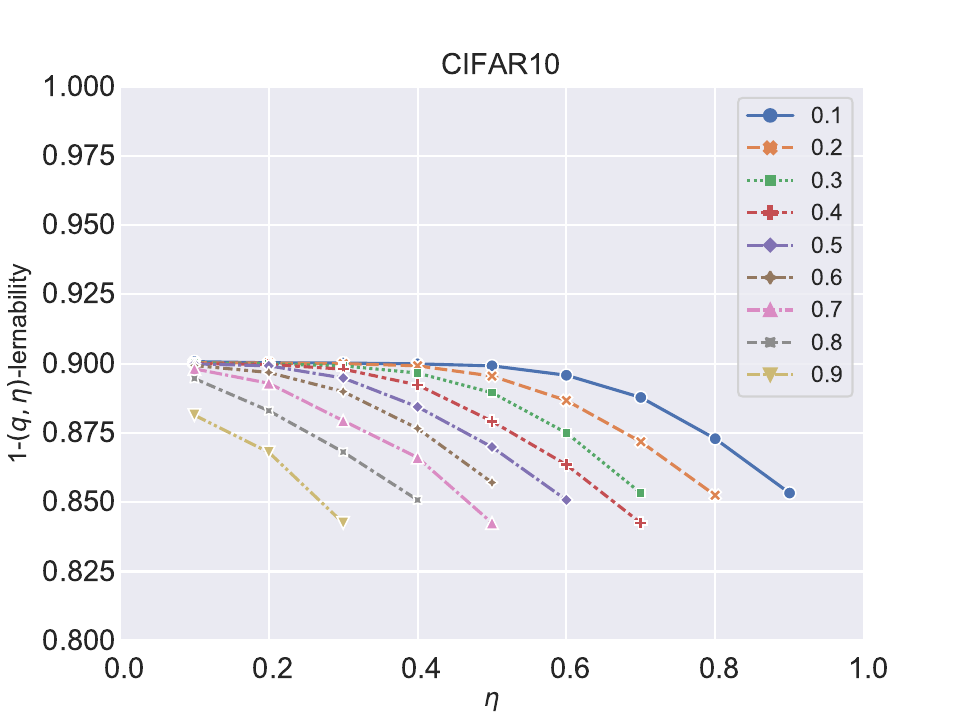}    
     \end{subfigure}
     \begin{subfigure}
         \centering
         \includegraphics[width=0.28\textwidth]{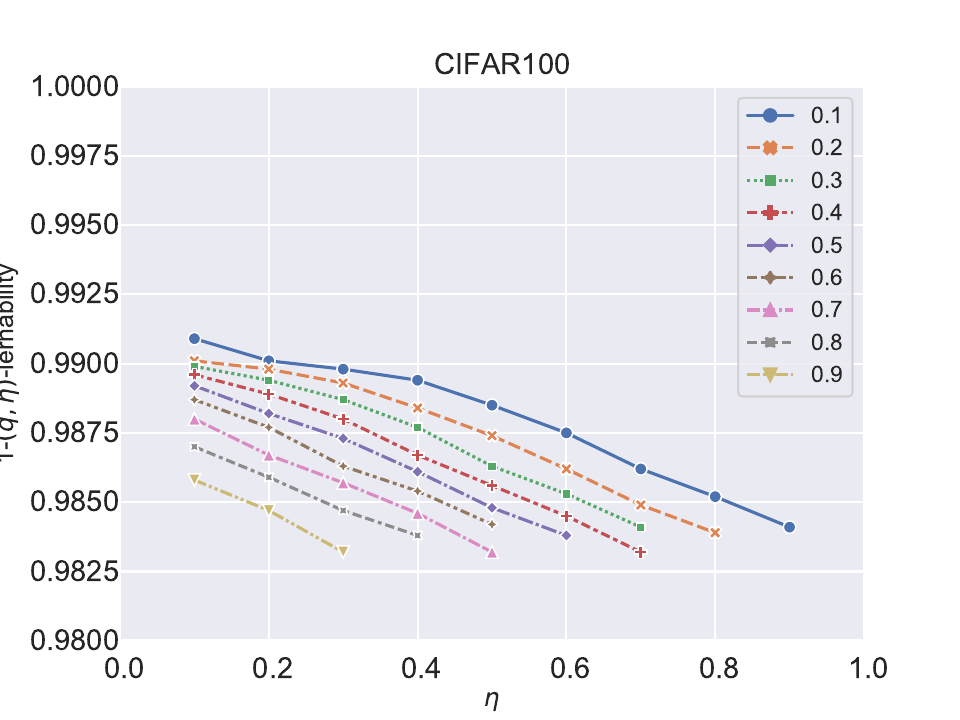}
     \end{subfigure}
     \begin{subfigure}
         \centering
         \includegraphics[width=0.28\textwidth]{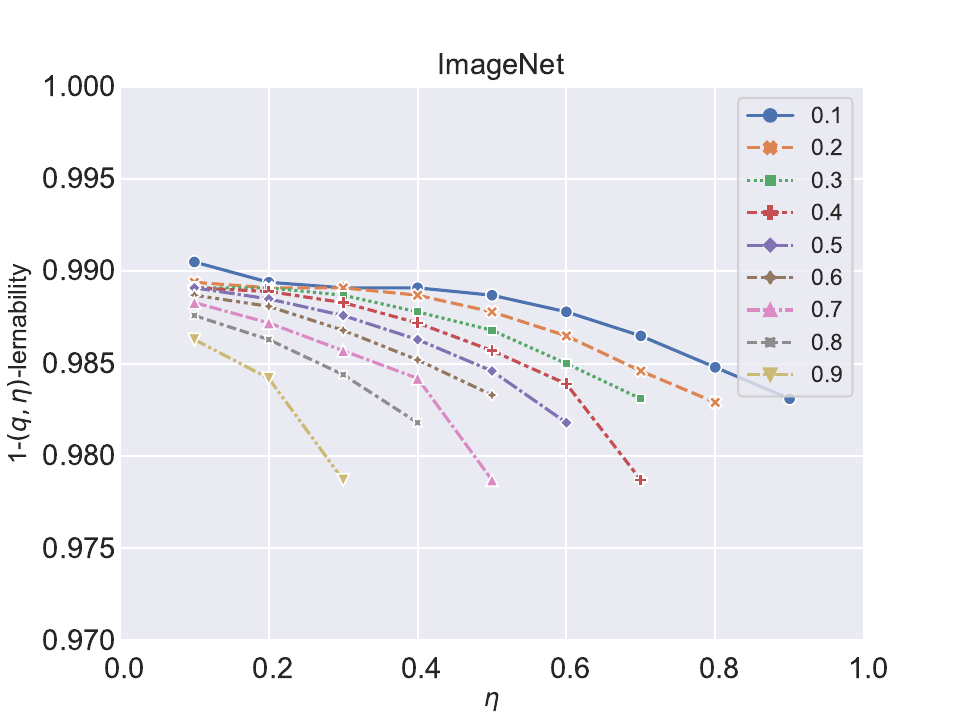}	 
     \end{subfigure}
\caption{The $(q, \eta)$-Learnability scores under different values of $q$ are plotted on three data distributions (\ie CIFAR10, CIFAR100, and ImageNet). At the same $\eta$, a greater probability $q$ can certify a higher $(q, \eta)$-Learnability.}
\label{fig:learnability_under_diff_p}
\end{figure*}

\noindent\textbf{Hardness results of recovery attacks.~}
Based on the same rules of gauging the surrogates using recovery attacks, we measure the performance of recovery attacks on classifiers trained on PUE, EMN, and OPS.
The classifiers are trained on CIFAR10/CIFAR100 with PAP noises. 
We recover the clean accuracy using $20\%$ of clean training data and calculate the best possible accuracy by training the poisoned classifiers on the clean test set. 
Note that adversaries possessing $20\%$ of clean data have a strong capability to restore the performance of the poisoned classifier, evaluating PAP noises under this setting can better reveal their robustness.
Moreover, data augmentations (\ie random crop, flip, and rotation) are employed in all recovery attacks to align with training strategies in real-world scenarios.

The comparison between EMN and PUE is presented in Figure~\ref{fig:recovery_attack_common}.
Compared to the results in Figure~\ref{fig:recovery_attack}, the clean test accuracy of the normally trained classifiers can be better restored.
On CIFAR10, the poisoned classifiers of PUE and EMN have initial accuracy scores of $11.62\%$ and $11.89\%$, respectively.
After the recovery, the EMN classifier has $78.55\%$ accuracy and the PUE one has $72.37\%$ accuracy.
On the side of the best possible accuracy, EMN can achieve $100\%$ and PUE reaches $99.55\%$.
Similarly, the EMN classifier can be restored to $43.01\%$ accuracy on CIFAR100, while the PUE classifier accuracy achieves $32.19\%$.
However, the classifier trained on PUE demonstrates better robustness than the EMN classifier in both the best possible case and the recovery attack case.
Most accuracy can be recovered within an $\eta$ of $1.0$, especially for CIFAR10.
This is also why we certify at most an $\eta$ of $1.0$ in Section~\ref{subsec:cert}.

Aside from the above comparison, we notice several intriguing phenomena of PUE and OPS in Figure~\ref{fig:recovery_attack_common_ops}.
OPS is a surrogate-free PAP noise. 
It searches for the pixel leading to the most significant change in the input data distribution, for each class in the dataset.
Therefore, OPS is supposed to have higher robustness since it is model-agnostic and can be generalized to different classifiers based on experiments. 
It can be observed from the figure that, though OPS has a lower empirically recovered accuracy than PUE, its best recoverable accuracy gets higher than that of PUE rapidly when $\eta$ grows. 
This implies that OPS has remarkable local robustness, possibly due to the effective manipulation of the input data distribution, which makes it hard to find proper clean training samples to reduce the clean generalization error of the trained classifier. 
However, when a set of plausible training samples is selected (\eg the Best from OPS case) for the recovery attack, OPS can be more vulnerable than other PAPs.
Specifically, PUEs reduce at most $54.4\%$ of the accuracy on CIFAR100.
The weakness becomes more obvious when the perturbations/uncertainty in the weights increase beyond a certain threshold.

The comparisons suggest that training GAs on PUE can lead to more robust poisoning results, especially against best-case recovery attacks.
We attribute this to two reasons.
First, theoretically, $\hat{\Theta}$ occupies a non-zero probability measure on every infinitesimal region on $\R^d$.
Therefore, these empirically trained classifiers have non-zero probabilities of being included in $\hat{\Theta}$.
In practice, a larger certified parametric radius $\eta$ helps increase the probability measure at classifiers far from the surrogate.
Second, the loss function \ref{eq:minmin_train} encourages PUE to reduce the accuracy of a collection of classifiers with Gaussian-modeled uncertainty in their parameters.
Consequently, the collective robustness of PUE towards classifiers showing uncertainties in their parameters is enhanced.

\subsection{Ablation Study}\label{subsec:ablation_study}
We conduct an ablation study on the trade-off between the STD $\sigma$ of the smoothing noise and the learnability scores over different factors. 
First, we evaluate how the train-time noise during PUE surrogate optimization affects the trade-off.
We train PUE-B surrogates on the CIFAR10/CIFAR100 datasets under different train-time noises and measure their certified learnability under smoothing noises with various $\sigma$ values.
We fix the value of $q$ at $0.9$ in all cases.
The results on CIFAR10 and CIFAR100 are in Table~\ref{table:sigma_learnability_tradoff_cifar10} and Table~\ref{table:sigma_learnability_tradoff_cifar100} of Appendix~\ref{append:ex_setup}, respectively.
The certification returns \textsc{Abstain} when a valid binomial confidence interval cannot be found. 
The abstentions are marked by ``$-$'' in the tables.
The highest certified learnability under each $\eta$ is highlighted.

The numbers in Table~\ref{table:sigma_learnability_tradoff_cifar10} provide some intriguing observations about the trade-off.
First, the largest certifiable parametric radius $\eta$ positively correlates with $\sigma$.
However, large $\sigma$ values result in decreases in the $(q, \eta)$-Learnability scores at small certified radii, which indicates that the negative impact of the smoothing noise towards the tightness gap is significant in these cases.
On the train-time noise side, generally, greater train-time noises make the resulting surrogates more endurable to large smoothing noises in the certification.
Thus, the certification performs better in large-$\sigma$ cases and produces better average certified learnability over different $\eta$ values.
Generally, when the train-time noise matches the $\sigma$ value, the highest certified learnability scores can be found.
For example, certifying using $\sigma=0.25$ obtained the highest $(q, \eta)$-Learnability score when the STD of the train-time noise is $0.25$.
We observe similar trends in the CIFAR100 certification.
This phenomenon suggests that the train-time noise should match the parametric smoothing noise to reduce the tightness gap.

Next, we evaluated the $(q, \eta)$-Learnability under different $q$ values to see how the certified learnability can be affected. 
We select $0.25$ as the STD for the train time noise and the parametric smoothing noise in this study.
The learnability scores are illustrated in Figure~\ref{fig:learnability_under_diff_p}.
From the figure, certifying at a higher probability $q$ compromises the maximal certifiable parametric radius across all datasets.  
However, the learnability scores certified at the same radius with larger $q$ values are usually larger than those certified with small $q$ values, which reveals tighter guarantees on the best cases unauthorized classifiers can achieve.
At smaller parametric radii, the certified learnability is less distinguishable among different $q$ values. 
Nevertheless, the discrepancy becomes obvious when $\eta$ grows. 
In this paper, to ensure the $(q, \eta)$-Learnability can be certified at reasonable $\eta$ values and be held with a sound probability, we select $q=0.9$ in the certification experiments.



\section{Discussion and Limitation}
In the following part, we will discuss the limitations of our methods and explore potential avenues for improvement.

\noindent \textbf{Certified parameter set.~}
A crucial step for improving the certificate is to increase the certified parametric radius $\eta$ such that the certificate can cover unauthorized classifiers with more diverse parameters.
Based on the theory and the experiments, we can find a negative correlation between $q$ and the maximum of certifiable $\eta$.
This trade-off suggests that confident certifications can only be made at small $\eta$.
While thoroughly breaking this trade-off seems impossible, it is possible to improve $\eta$ by meticulously designing the parametric smoothing noise and the surrogate training method.
In Theorem~\ref{theorem:cert_learnability}, the certification abstains if $\overline{q}$ is too large such that a valid Binomial confidence interval cannot be found.
To avoid the abstention at large $\eta$, we should use a substantial $\sigma$ since $\eta$ scales with $\sigma$. 
Next, to make such a substantial $\sigma$ functioning, we should have a surrogate classifier that can tolerate a significant level of smoothing noise while not degrading the tightness of the certification.
Through experiments, we verify that larger parametric smoothing noise indeed leads to a broader certifiable parametric radius. 
Nevertheless, increasing the parametric radius $\eta$ cannot be solely resorted to enlarging the smoothing noise since larger noise can also compromise the tightness of the $(q, \eta)$-Learnability.
Specifically, too-large noises negatively impact the testing accuracy of the classifiers randomized from an insufficiently trained surrogate and make them fail to obtain plausible clean test accuracy scores.
We propose a baseline solution in this paper to alleviate the negative effects of the parametric noise, but we reckon further endeavors can be made to increase $\eta$ at high values of $q$.

\noindent \textbf{Tightness gap of the certification.~}
Given the noticeable connection between certified learnability and certified robustness, a possible way to further improve the tightness of the certification might be using a combination of adversarial training techniques, such as AWP~\cite{wu2020adversarial} and SmoothAdv\cite{salman2019provably}, to find the most critical perturbation direction for the weights while training the surrogate.
In contrast, optimizing the PAP noise against these carefully augmented surrogates may lead to more robust UEs.
Another possible pathway is to select better parametric smoothing noise.
Though this simple uni-variance Gaussian noise used in this paper endorses a tidy form of $(q, \eta)$-Learnability, we suspect the tightness gap can be further reduced by introducing better smoothing noises.
First, the probability density of Gaussian might not be optimal in discovering the best cases of classifiers.
Intuitively, the optimal parametric smoothing noise should have a large probability of drawing parameters that can lead to the highest clean test accuracy.
Second, the weights in different layers of a classifier may respond differently against the noise added to them, rendering uni-variance noises less efficient in surrogate augmentation and certification. 
A better option could be applying various levels of train-time noise in different layers to cope with the layer-wise noise sensitivities and using corresponding parametric smoothing noise in the certification.

\noindent \textbf{Real-world practicality.~}
Certified $(q, \eta)$-Learnability is a versatile framework that can be applied to different data modalities and tasks, as the certification holds in the parameter space and $A_{\sD}(\hat{\theta})$ can represent any function that maps model parameters to a scalar value.
As a proof of concept, we certify the unlearnable SST2 training set with Error-min-0 modifications~\cite{li2023make} via an LSTM surrogate.
A $(0.9, \eta)$-learnability of $0.61$ can be certified at $\eta=1$ while the accuracy at $\eta=0$ is $0.52$.
Furthermore, the certification framework and PUE generation process may introduce additional computational overhead. 
We benchmark this overhead across various model architectures and datasets, as shown in Tables~\ref{table:online_train_cost} and \ref{table:offline_train_cost}.
The overhead is confined to the certification and PUE generation stages and remains manageable.
To reduce the cost, techniques such as perturbing only specific layers during PUE generation can be explored. 
Importantly, the application of PUE does not impact real-time data streaming, as PAP noise for each data category can be pre-generated and applied to corresponding streaming data.
Finally, PUE requires a high perturbation rate, which means that a small proportion of PUEs in a clean dataset will not hinder the training process for authorized users. 
Legitimate users can also be provided with the PUE perturbations to restore data learnability. 
These features suggest that PUE is compatible with other defenses, such as model watermarking and unlearning, as the functionality of the watermark or retained set can be maintained.

\section{Related work}
\noindent\textbf{Perturbative availability poison and UEs.}
PAPs emerged as a response to unauthorized models trained to infer private datasets. 
There were early attempts that generate perturbations dynamically on surrogate models, which is computationally intensive~\cite{shen2019tensorclog,feng2019learning}.
To alleviate the cost, error-minimizing perturbations were introduced to produce unlearnable data samples~\cite{huangunlearnable2021}. 
Subsequently, the training cost of the perturbation was further reduced by using pretrained surrogate classifiers or generalized neural tangent kernels~\cite{tao2021better,fowl2021adversarial,yuan2021neural,chen2022self}.
A series of works also resort to surrogate-free perturbations~\cite{evtimov2021disrupting,yu2022availability,sandoval2022autoregressive,wu2023one}.
In addition, enhancements were proposed to make transferable PAP noises or robust UEs against adversarial training ~\cite{furobust2022,wen2022adversarial,ren2022transferable}.
We also notice some work making efforts to address the uncertainty of UEs~\cite{lu2023exploring,he2024sharpness}.
Particularly, the sharpness-aware method SAPA~\cite{he2024sharpness} leverages weight perturbations to approximate the worst-case model for the poison attack.
However, the perturbation considered in SAPA is model- and data-dependent and may not be effective with variations in the type of loss function. 
Moreover, all the current methods fall short of offering a provable guarantee on the robustness of UEs against uncertainties in learning algorithms and adaptive adversaries in the field.

\noindent\textbf{Certified robustness via randomized smoothing.~} 
Following previous certified robustness works using differential privacy and R\'enyi divergence~\cite{lecuyer2019certified,li2019certified}, randomized smoothing was first proposed by Cohen et al. as a tool for supplying a tight $\ell_2$ robustness certificate for black-box functions, based on Neyman-Pearson Lemma (NPL)~\cite{cohen2019certified}.
Though randomized smoothing places no restriction on its based classifiers, a line of work aimed at improving the base classifiers to obtain a better trade-off between the certified accuracy and the certified radius~\cite{salman2019provably,zhai2020macer,jeong2020consistency,jeong2021smoothmix,jeong2023confidence}. 
On the other hand, some trials have been made to design more capable smoothing noises, smoothing pipelines, and certification algorithms beyond NPL~\cite{fischer2020certified,li2021tss, hao2022gsmooth,sukenik2022intriguing,cullen2022double,dvijotham2020framework,zhang2020black,salman2020denoised}.
Intriguingly, a series of works discovered that the maximum of certifiable $\ell_p$ ($p>2$) radius shrinks with the increasing input dimensionality~\cite{kumar20bcurse,yang2020randomized,li2022double,shu2024effects}.
Moreover, randomized smoothing has been extended to poisoning defenses\cite{zhang2022bagflip,weber2022rab,kumar2023provable,zhang2023pecan}, object detection~\cite{chiang2020detection}, and watermark verification~\cite{bansal2022certified}.
The works in this area focus on the certified robustness based on worst-case models rather than examining the certified learnability of datasets through the analysis of the best-case models trained on them.

\section{Conclusion}
In this paper, we propose a certification mechanism for deriving the possibly best clean data utility that unauthorized classifiers trained on UEs can achieve.
The certified $(q, \eta)$-Learnability indicates that, with probability at least $q$, the unauthorized classifiers whose weights are from a specific parameter subspace have a guaranteed upper bound on their clean test accuracy.
Moreover, we propose a way to craft provably unlearnable examples that obtain smaller $(q, \eta)$-Learnability scores, compared with the ones crafted by existing methods. 
Our certification mechanism takes the first step towards certifiably robust and provably effective UEs, which can provide a rigid guarantee on the protection level of PAP noises for data availability control.
However, there exists a tightness gap between the certified $(q, \eta)$-Learnability and True Learnability.
Such a tightness gap can be better alleviated in the future to make the certification more tenable.
Crucially, such alleviation may also expand the space of certifiable parameters to cover greater parameter variance resulting from training techniques such as adversarial training.  
Finally, we reckon that future attempts can be made to extend PUE to sample-wise noises and reduce the training cost of making PUEs.

\section*{Acknowledgments}
This work has been supported by the Cyber Security Research Centre Limited whose activities are partially funded by the Australian Government's Cooperative Research Centres Programme. Minhui Xue is supported in part by Australian Research Council (ARC) DP240103068 and in part by CSIRO -- National Science Foundation (US) AI Research Collaboration Program.

\bibliographystyle{IEEEtran}
\bibliography{advref_abbrev}


\appendix

\subsection{More Algorithmic Details}\label{append:algorithm_details}
\noindent\textbf{Computing $(q, \eta)$-Learnability.~} 
We introduce the detailed steps of computing $(q, \eta)$-Learnability.
Please refer to Algorithm~\ref{alg:quantile_estimate} and Algorithm~\ref{alg:cert} for the pseudocode.

First, according to Theorem~\ref{theorem:cert_learnability}, $\overline{q} = \Phi( \Phi^{-1}(q) + \frac{\eta}{\sigma})$ can be theoretically computed by supplying the values of $q$, $\eta$, and $\sigma$.
Next, we sample a finite number of $A_{\sD}(\hat{\theta}+\epsilon)$ using Monte Carlo.
For simplicity, we denote the sampled and sorted $A_{\sD}(\hat{\theta}+\epsilon)$ as a set $a = \{a_1, a_2, ..., a_n\}$, where $a_1\leq a_2 \cdots\leq a_n$.
Based on the order statistics $a$, the smallest accuracy $a_t\in a$ satisfying $\Pr_{a_i\in a}[a_i \leq a_t]\geq \overline{q}$ can be found as the accuracy in the empirical $\overline{q}$-th quantile (\ie $a_t$ is the empirically calculated value of ${h}_{\overline{q}}(\hat{\theta})$).
Note the $(q, \eta)$-Learnability should be calculated as the confidence interval upper bound of $a_t$.
Calculating a one-sided confidence interval upper bound resorts to finding the $k$-th accuracy in $a$, such that by giving a confidence level $1-\alpha$, there is $\Pr[a_t \leq a_k]\geq 1-\alpha$.
There is
\begin{equation*}
\begin{aligned}
    \Pr[a_t \leq a_k] &= \sum_{i=1}^{k} \Pr[a_{i-1} \leq a_t \leq a_{i}] \\
    &= \sum_{i=1}^k \binom{n}{i} (\overline{q})^i (1-\overline{q})^{n-i}.
\end{aligned}
\end{equation*}
Thereafter, we just find the smallest $k$ letting 
\begin{equation*}
    \sum_{i=1}^k \binom{n}{i} (\overline{q})^i (1-\overline{q})^{n-i} \geq 1-\alpha. 
\end{equation*}
The $k$-th accuracy $a_k$ is returned as the certified $(q, \eta)$-Learnability $l_{(q,\eta)}(\hat{\Theta}; \sD_s\oplus\delta)$.

\noindent\textbf{Algorithms of surrogate training and PAP noise optimization.~} 
We attach the details of the surrogate training step and PAP noise optimization step in Algorithm~\ref{alg:train_step} and Algorithm~\ref{alg:opt_step}, respectively. 
The surrogate is updated by using gradient descent with a learning rate of $r$ given $\frac{l}{U_{train}}$ from Algorithm~\ref{alg:delta_opt}.
During optimizing the PAP noise $\delta$, if using class-wise noise, the noise of each class is independently updated by optimizing using examples from the corresponding class.
Specifically, suppose there are $K$ classes of data in the mini-batch, for the $i$-th class, its corresponding noise $\delta[i]$ is updated as 
\begin{equation*}
    \begin{aligned}
         \delta[i] \gets Clip_{\pm\xi}\left( x[i] - r_p \cdot \sign (\frac{1}{U_{perturb}} \bigtriangledown_{\delta[i]} l[i])\right) - x[i],
    \end{aligned}
\end{equation*}
where $x[i]$ are input samples belonging to the $i$-th class and $l[i]=\sum_{j=1}^{S/s}\ \ L[f_{\hat{\theta}+\epsilon_j}(x[i]), i]$ is the accumulated loss calculated from samples of the $i$-class in the mini-batch.

\begin{algorithm}[htb]
\footnotesize
\caption{Training Step of Surrogate}\label{alg:train_step}
\textbf{func} \textsc{Train\_step}\\
\KwIn{$\theta$, $l/U_{train}$, learning rate $r$}
    $\theta \gets \theta - \frac{r}{U_{train}} \cdot \bigtriangledown_{\theta} l$\\
\KwOut{$\theta$}
\end{algorithm}

\begin{algorithm}[htb]
\footnotesize
\caption{Optimization Step of PUE}\label{alg:opt_step}
\textbf{func} \textsc{Opt\_step}\\
\KwIn{$\delta$, $x$, $l/U_{perturb}$, learning rate $r_p$}
    $\delta \gets Clip_{\pm\xi}\left(x - r_p \cdot \sign (\frac{1}{U_{perturb}} \bigtriangledown_{\delta} l)\right) - x$\\
\KwOut{$\delta$}
\end{algorithm}

\begin{table*}[t]
\caption{Trade-off between Train-Time/Certification Noise Level and Learnability ($\%$) on CIFAR10}
\label{table:sigma_learnability_tradoff_cifar10}
\centering
\resizebox{.85\textwidth}{!}{%
\begin{tabular}{ccccccccccccccccccccccccc}
\toprule
\multirow{2}{*}{Train Noise} & \multirow{2}{*}{Certification Noise ($\sigma$)} & \multicolumn{23}{c}{$\eta \times 100$} \\
\cmidrule(r){3-25} 
                             &                                      &  0.1  &  0.5  &  1.0  &  5.0  &  10.0 & 15.0  & 20.0  & 25.0  & 30.0  & 35.0  & 40.0  & 45.0  & 50.0  &  55.0 & 60.0  & 65.0  & 70.0  & 75.0  & 80.0  & 85.0  & 90.0  & 95.0  & 100.0\\
\midrule
\multirow{10}{*}{0.10}       &  0.10                                & \textbf{14.57} & \textbf{14.68} & \textbf{14.76} & \textbf{16.00} & \textbf{17.26} &   -   &   -   &   -   &   -   &   -   &   -   &   -   &   -   &   -   &   -   &   -   &   -   &   -   &   -   &   -   &   -   &   -   &   -  \\
                             &  0.20                                & 10.41 & 10.42 & 10.49 & 11.01 & 11.36 & \textbf{12.00} & \textbf{12.66} & \textbf{13.57} &   -   &   -   &   -   &   -   &   -   &   -   &   -   &   -   &   -   &   -   &   -   &   -   &   -   &   -   &   -  \\
                             &  0.30                                & 10.49 & 10.52 & 10.53 & 10.68 & 11.00 & 11.23 & 11.63 & 11.75 & 12.16 & \textbf{12.73} &   -   &   -   &   -   &   -   &   -   &   -   &   -   &   -   &   -   &   -   &   -   &   -   &   -  \\
                             &  0.40                                & 10.51 & 10.52 & 10.52 & 10.72 & 10.92 & 11.14 & 11.50 & 11.70 & 12.04 & 12.23 & \textbf{12.77} & \textbf{12.88} & \textbf{13.49} &   -   &   -   &   -   &   -   &   -   &   -   &   -   &   -   &   -   &   -  \\
                             &  0.50                                & 10.67 & 10.69 & 10.69 & 10.81 & 11.12 & 11.24 & 11.62 & 12.01 & \textbf{12.28} & 12.44 & 12.60 & 12.75 & 12.89 & 13.07 & 13.11 & 13.11 &   -   &   -   &   -   &   -   &   -   &   -   &   -  \\
                             &  0.60                                & 10.66 & 10.67 & 10.68 & 10.86 & 10.95 & 11.03 & 11.14 & 11.38 & 11.65 & 12.08 & 12.38 & 12.63 & 12.98 & \textbf{13.07} & \textbf{13.45} & \textbf{13.46} & \textbf{13.75} & \textbf{14.80} &   -   &   -   &   -   &   -   &   -  \\
                             &  0.70                                & 10.51 & 10.51 & 10.53 & 10.60 & 10.74 & 10.82 & 10.95 & 11.12 & 11.19 & 11.32 & 11.45 & 11.63 & 11.70 & 11.80 & 11.99 & 12.28 & 12.57 & 12.77 & \textbf{13.67} & \textbf{13.70} & \textbf{13.70} &   -   &   -  \\
                             &  0.80                                & 10.49 & 10.49 & 10.50 & 10.54 & 10.62 & 10.69 & 10.75 & 10.90 & 10.99 & 11.06 & 11.24 & 11.34 & 11.41 & 11.47 & 11.66 & 11.76 & 11.88 & 12.00 & 12.43 & 12.54 & 12.95 & \textbf{12.95} & \textbf{13.18}\\
                             &  0.90                                & 10.50 & 10.50 & 10.53 & 10.56 & 10.62 & 10.70 & 10.78 & 10.91 & 10.98 & 11.06 & 11.18 & 11.29 & 11.35 & 11.56 & 11.77 & 11.94 & 12.06 & 12.29 & 12.33 & 12.58 & 12.58 & 12.81 & 13.01\\
                             &  1.00                                & 10.49 & 10.49 & 10.50 & 10.54 & 10.61 & 10.67 & 10.74 & 10.82 & 10.93 & 11.03 & 11.12 & 11.28 & 11.43 & 11.71 & 11.80 & 11.96 & 12.12 & 12.15 & 12.17 & 12.22 & 12.36 & 12.50 & 12.60\\
\midrule
\multirow{11}{*}{0.15}       &  0.10                                & \textbf{13.75} & \textbf{13.85} & \textbf{14.04} & \textbf{15.17} & \textbf{16.59} &   -   &   -   &   -   &   -   &   -   &   -   &   -   &   -   &   -   &   -   &   -   &   -   &   -   &   -   &   -   &   -   &   -   &   -  \\
                             &  0.15                                & 10.53 & 10.60 & 10.62 & 11.26 & 11.99 & \textbf{13.67} &   -   &   -   &   -   &   -   &   -   &   -   &   -   &   -   &   -   &   -   &   -   &   -   &   -   &   -   &   -   &   -   &   -  \\
                             &  0.20                                & 10.57 & 10.59 & 10.65 & 10.99 & 11.54 & 12.07 & \textbf{13.14} & \textbf{15.20} &   -   &   -   &   -   &   -   &   -   &   -   &   -   &   -   &   -   &   -   &   -   &   -   &   -   &   -   &   -  \\
                             &  0.30                                & 10.49 & 10.49 & 10.54 & 10.73 & 11.03 & 11.39 & 11.82 & 13.00 & \textbf{13.43} & \textbf{13.67} &   -   &   -   &   -   &   -   &   -   &   -   &   -   &   -   &   -   &   -   &   -   &   -   &   -  \\
                             &  0.40                                & 10.47 & 10.50 & 10.51 & 10.69 & 10.87 & 11.01 & 11.26 & 11.48 & 11.87 & 12.36 & \textbf{12.55} & \textbf{13.10} & \textbf{13.92} &   -   &   -   &   -   &   -   &   -   &   -   &   -   &   -   &   -   &   -  \\
                             &  0.50                                & 10.44 & 10.44 & 10.47 & 10.58 & 10.69 & 10.82 & 11.09 & 11.39 & 11.61 & 12.28 & 12.34 & 12.61 & 12.78 & \textbf{13.12} & \textbf{13.72} & \textbf{13.72} &   -   &   -   &   -   &   -   &   -   &   -   &   -  \\
                             &  0.60                                & 10.45 & 10.50 & 10.53 & 10.62 & 10.74 & 10.87 & 11.04 & 11.19 & 11.46 & 11.51 & 11,65 & 11.80 & 12.11 & 12.30 & 12.60 & 12.63 & \textbf{13.55} & \textbf{14.10} &   -   &   -   &   -   &   -   &   -  \\
                             &  0.70                                & 10.40 & 10.40 & 10.42 & 10.49 & 10.57 & 10.66 & 10.78 & 10.96 & 11.05 & 11.20 & 11.32 & 11.43 & 11.63 & 11.78 & 12.04 & 12.39 & 12.58 & 12.72 & \textbf{13.03} & \textbf{13.47} & \textbf{13.47} &   -   &   -  \\
                             &  0.80                                & 10.38 & 10.38 & 10.38 & 10.42 & 10.53 & 10.62 & 10.68 & 10.79 & 10.92 & 11.03 & 11.20 & 11.36 & 11.52 & 11.74 & 12.28 & 12.33 & 12.48 & 12.49 & 12.70 & 13.16 & 13.16 & \textbf{13.16} & \textbf{13.74}\\
                             &  0.90                                & 10.42 & 10.42 & 10.43 & 10.51 & 10.56 & 10.62 & 10.71 & 10.78 & 10.92 & 10.97 & 11.03 & 11.16 & 11.19 & 11.25 & 11.39 & 11.67 & 11.81 & 12.06 & 12.12 & 12.17 & 12.20 & 12.38 & 12.78\\
                             &  1.00                                & 10.35 & 10.35 & 10.35 & 10.42 & 10.50 & 10.64 & 10.73 & 10.76 & 10.87 & 10.97 & 11.04 & 11.17 & 11.24 & 11.28 & 11.38 & 11.48 & 11.53 & 11.63 & 11.75 & 12.14 & 12.38 & 12.74 & 12.80\\
\midrule
\multirow{10}{*}{0.20}       &  0.10                                & 10.58 & 10.60 & 10.63 & 10.96 & \textbf{13.70} &   -   &   -   &   -   &   -   &   -   &   -   &   -   &   -   &   -   &   -   &   -   &   -   &   -   &   -   &   -   &   -   &   -   &   -  \\
                             &  0.20                                & \textbf{11.70} & \textbf{11.89} & \textbf{11.98} & \textbf{12.58} & 13.38 & \textbf{14.21} & \textbf{15.02} & \textbf{15.29} &   -   &   -   &   -   &   -   &   -   &   -   &   -   &   -   &   -   &   -   &   -   &   -   &   -   &   -   &   -  \\
                             &  0.30                                & 10.57 & 10.57 & 10.61 & 10.84 & 11.15 & 11.61 & 11.81 & 12.24 & \textbf{12.77} & \textbf{12.95} &   -   &   -   &   -   &   -   &   -   &   -   &   -   &   -   &   -   &   -   &   -   &   -   &   -  \\
                             &  0.40                                & 10.42 & 10.42 & 10.44 & 10.65 & 10.90 & 11.10 & 11.22 & 11.31 & 11.81 & 11.96 & \textbf{12.18} & \textbf{13.07} & \textbf{13.18} &   -   &   -   &   -   &   -   &   -   &   -   &   -   &   -   &   -   &   -  \\
                             &  0.50                                & 10.46 & 10.46 & 10.46 & 10.57 & 10.71 & 10.86 & 11.07 & 11.27 & 11.31 & 11.40 & 11.51 & 11.73 & 12.42 & \textbf{12.59} & \textbf{14.54} & \textbf{14.54} &   -   &   -   &   -   &   -   &   -   &   -   &   -  \\
                             &  0.60                                & 10.42 & 10.44 & 10.46 & 10.55 & 10.64 & 10.77 & 10.89 & 10.96 & 11.09 & 11.18 & 11.34 & 11.46 & 11.68 & 12.02 & 12.33 & 12.38 & \textbf{12.96} & \textbf{13.11} &   -   &   -   &   -   &   -   &   -  \\
                             &  0.70                                & 10.34 & 10.36 & 10.37 & 10.46 & 10.57 & 10.61 & 10.68 & 10.80 & 10.96 & 11.13 & 11.28 & 11.50 & 11.61 & 11.70 & 11.87 & 12.06 & 12.36 & 12.70 & \textbf{13.00} & \textbf{14.18} & \textbf{14.18} &   -   &   -  \\
                             &  0.80                                & 10.40 & 10.40 & 10.44 & 10.47 & 10.61 & 10.73 & 10.76 & 10.91 & 10.97 & 11.08 & 11.22 & 11.30 & 11.47 & 11.61 & 12.04 & 12.17 & 12.20 & 12.57 & 12.74 & 13.08 & 13.19 & 13.19 & 13.23\\
                             &  0.90                                & 10.45 & 10.45 & 10.46 & 10.55 & 10.58 & 10.62 & 10.72 & 10.80 & 10.90 & 11.00 & 11.05 & 11.18 & 11.31 & 11.40 & 11.64 & 11.72 & 11.99 & 12.08 & 12.19 & 12.49 & 12.73 & \textbf{13.30} & \textbf{13.48}\\
                             &  1.00                                & 10.43 & 10.43 & 10.44 & 10.49 & 10.54 & 10.61 & 10.67 & 10.78 & 10.92 & 10.99 & 11.04 & 11.08 & 11.23 & 11.34 & 11.36 & 11.44 & 11.54 & 11.73 & 11.81 & 11.99 & 12.44 & 12.61 & 12.76\\
\midrule
\multirow{11}{*}{0.25}       &  0.10                                &  9.96 &  9.96 &  9.96 & 10.00 & 10.98 &   -   &   -   &   -   &   -   &   -   &   -   &   -   &   -   &   -   &   -   &   -   &   -   &   -   &   -   &   -   &   -   &   -   &   -  \\
                             &  0.20                                & 10.02 & 10.24 & 10.27 & 10.58 & 10.98 & 11.85 & 12.69 & 14.49 &   -   &   -   &   -   &   -   &   -   &   -   &   -   &   -   &   -   &   -   &   -   &   -   &   -   &   -   &   -  \\
                             &  0.25                                & \textbf{10.62} & \textbf{10.67} & \textbf{10.71} & \textbf{11.07} & \textbf{11.86} & \textbf{12.05} & \textbf{13.20} & \textbf{14.67} & \textbf{15.75} &   -   &   -   &   -   &   -   &   -   &   -   &   -   &   -   &   -   &   -   &   -   &   -   &   -   &   -  \\
                             &  0.30                                & 10.35 & 10.35 & 10.40 & 10.69 & 10.98 & 11.32 & 11.98 & 12.57 & 12.95 & \textbf{13.02} &   -   &   -   &   -   &   -   &   -   &   -   &   -   &   -   &   -   &   -   &   -   &   -   &   -  \\
                             &  0.40                                & 10.41 & 10.43 & 10.44 & 10.63 & 10.86 & 11.15 & 11.28 & 11.57 & 12.58 & 12.82 & \textbf{13.03} & \textbf{13.18} & \textbf{14.21} &   -   &   -   &   -   &   -   &   -   &   -   &   -   &   -   &   -   &   -  \\
                             &  0.50                                & 10.40 & 10.41 & 10.42 & 10.53 & 10.73 & 10.88 & 11.17 & 11.45 & 11.60 & 11.76 & 11.92 & 12.22 & 12.73 & \textbf{12.76} & \textbf{14.00} & \textbf{14.00} &   -   &   -   &   -   &   -   &   -   &   -   &   -  \\
                             &  0.60                                & 10.40 & 10.40 & 10.41 & 10.50 & 10.61 & 10.73 & 10.83 & 10.98 & 11.15 & 11.41 & 11.51 & 11.76 & 11.93 & 11.99 & 12.49 & 12.76 & \textbf{13.08} & \textbf{15.25} &   -   &   -   &   -   &   -   &   -  \\
                             &  0.70                                & 10.47 & 10.50 & 10.52 & 10.57 & 10.72 & 10.83 & 10.96 & 11.02 & 11.14 & 11.19 & 11.31 & 11.55 & 11.77 & 11.90 & 12.13 & 12.19 & 12.38 & 12.55 & \textbf{12.95} & \textbf{15.70} & \textbf{15.70} &   -   &   -  \\
                             &  0.80                                & 10.53 & 10.53 & 10.53 & 10.59 & 10.68 & 10.77 & 10.86 & 10.95 & 11.18 & 11.28 & 11.37 & 11.49 & 11.52 & 11.83 & 12.00 & 12.12 & 12.71 & 12.72 & 12.87 & 12.91 & 13.72 & \textbf{13.72} & \textbf{15.17}\\
                             &  0.90                                & 10.52 & 10.52 & 10.53 & 10.59 & 10.66 & 10.77 & 10.88 & 10.96 & 11.09 & 11.27 & 11.40 & 11.47 & 11.51 & 11.52 & 11.85 & 12.21 & 12.23 & 12.51 & 12.90 & 13.10 & 13.36 & 13.62 & 13.96\\
                             &  1.00                                & 10.50 & 10.50 & 10.51 & 10.56 & 10.66 & 10.75 & 10.78 & 10.82 & 10.96 & 11.07 & 11.17 & 11.26 & 11.51 & 11.57 & 11.62 & 11.71 & 11.82 & 12.12 & 12.24 & 12.42 & 12.67 & 13.10 & 13.18\\
\bottomrule
\end{tabular}
}
\end{table*}

\begin{table*}[t]
\caption{Trade-off between Train-Time/Certification Noise Level and Learnability ($\%$) on CIFAR100}
\label{table:sigma_learnability_tradoff_cifar100}
\centering
\resizebox{.8\linewidth}{!}{%
\begin{tabular}{ccccccccccccccccccccccccc}
\toprule
\multirow{2}{*}{Train Noise} & \multirow{2}{*}{Certification Noise ($\sigma$)} & \multicolumn{23}{c}{ $\eta \times 100$} \\
\cmidrule(r){3-25} 
                             &                                      &  0.1  &  0.5  &  1.0  &  5.0  &  10.0 & 15.0  & 20.0  & 25.0  & 30.0  & 35.0  & 40.0  & 45.0  & 50.0  & 55.0  & 60.0  & 65.0  & 70.0  & 75.0  & 80.0  & 85.0  & 90.0  & 95.0  & 100.0\\
\midrule
\multirow{10}{*}{0.10}       &  0.10                                &  \textbf{1.81} &  \textbf{1.82} &  \textbf{1.84} &  \textbf{1.97} &  \textbf{2.27} &   -   &   -   &   -   &   -   &   -   &   -   &   -   &   -   &   -   &   -   &   -   &   -   &   -   &   -   &   -   &   -   &   -   &   -  \\
                             &  0.20                                &  1.17 &  1.17 &  1.17 &  1.21 &  1.32 &  \textbf{1.40} &  \textbf{1.49} &  \textbf{1.60} &   -   &   -   &   -   &   -   &   -   &   -   &   -   &   -   &   -   &   -   &   -   &   -   &   -   &   -   &   -  \\
                             &  0.30                                &  1.13 &  1.13 &  1.13 &  1.17 &  1.22 &  1.25 &  1.33 &  1.40 &  \textbf{1.44} &  \textbf{1.49} &   -   &   -   &   -   &   -   &   -   &   -   &   -   &   -   &   -   &   -   &   -   &   -   &   -  \\
                             &  0.40                                &  1.13 &  1.13 &  1.13 &  1.15 &  1.17 &  1.19 &  1.23 &  1.28 &  1.35 &  1.41 &  \textbf{1.50} &  \textbf{1.52} &  \textbf{1.60} &   -   &   -   &   -   &   -   &   -   &   -   &   -   &   -   &   -   &   -  \\
                             &  0.50                                &  1.12 &  1.13 &  1.13 &  1.14 &  1.16 &  1.18 &  1.20 &  1.23 &  1.25 &  1.33 &  1.34 &  1.35 &  1.37 &  1.37 &  \textbf{1.57} &  1.57 &   -   &   -   &   -   &   -   &   -   &   -   &   -  \\
                             &  0.60                                &  1.12 &  1.12 &  1.12 &  1.14 &  1.16 &  1.19 &  1.21 &  1.24 &  1.26 &  1.30 &  1.32 &  1.38 &  1.40 &  \textbf{1.46} &  1.48 &  \textbf{1.61} &  \textbf{1.70} &  \textbf{1.74} &   -   &   -   &   -   &   -   &   -  \\
                             &  0.70                                &  1.12 &  1.12 &  1.12 &  1.13 &  1.15 &  1.16 &  1.18 &  1.19 &  1.21 &  1.24 &  1.25 &  1.28 &  1.30 &  1.32 &  1.34 &  1.35 &  1.42 &  1.43 &  \textbf{1.58} &  \textbf{1.66} &  \textbf{1.66} &   -   &   -  \\
                             &  0.80                                &  1.12 &  1.12 &  1.12 &  1.13 &  1.16 &  1.16 &  1.18 &  1.19 &  1.20 &  1.22 &  1.25 &  1.27 &  1.28 &  1.33 &  1.34 &  1.35 &  1.38 &  1.38 &  1.42 &  1.42 &  1.48 &  \textbf{1.48} &  \textbf{1.60}\\
                             &  0.90                                &  1.11 &  1.11 &  1.11 &  1.12 &  1.12 &  1.13 &  1.14 &  1.15 &  1.17 &  1.17 &  1.19 &  1.20 &  1.21 &  1.22 &  1.22 &  1.25 &  1.25 &  1.30 &  1.33 &  1.39 &  1.41 &  1.44 &  1.56\\
                             &  1.00                                &  1.11 &  1.11 &  1.11 &  1.11 &  1.12 &  1.13 &  1.14 &  1.15 &  1.16 &  1.17 &  1.17 &  1.19 &  1.19 &  1.21 &  1.23 &  1.23 &  1.26 &  1.28 &  1.31 &  1.32 &  1.34 &  1.36 &  1.36\\
\midrule
\multirow{10}{*}{0.15}       &  0.10                                &  \textbf{1.85} &  \textbf{1.86} &  \textbf{1.88} &  \textbf{2.01} &  \textbf{2.33} &   -   &   -   &   -   &   -   &   -   &   -   &   -   &   -   &   -   &   -   &   -   &   -   &   -   &   -   &   -   &   -   &   -   &   -  \\
                             &  0.15                                &  1.61 &  1.18 &  1.19 &  1.27 &  1.39 &  \textbf{1.51} &   -   &   -   &   -   &   -   &   -   &   -   &   -   &   -   &   -   &   -   &   -   &   -   &   -   &   -   &   -   &   -   &   -  \\
                             &  0.20                                &  1.13 &  1.13 &  1.14 &  1.19 &  1.25 &  1.37 &  \textbf{1.45} &  \textbf{1.66} &   -   &   -   &   -   &   -   &   -   &   -   &   -   &   -   &   -   &   -   &   -   &   -   &   -   &   -   &   -  \\
                             &  0.30                                &  1.12 &  1.13 &  1.13 &  1.15 &  1.21 &  1.25 &  1.33 &  1.40 &  \textbf{1.48} &  \textbf{1.50} &   -   &   -   &   -   &   -   &   -   &   -   &   -   &   -   &   -   &   -   &   -   &   -   &   -  \\
                             &  0.40                                &  1.13 &  1.13 &  1.13 &  1.16 &  1.19 &  1.24 &  1.30 &  1.32 &  1.40 &  1.43 &  \textbf{1.46} &  \textbf{1.49} &  \textbf{1.52} &   -   &   -   &   -   &   -   &   -   &   -   &   -   &   -   &   -   &   -  \\
                             &  0.50                                &  1.13 &  1.13 &  1.13 &  1.14 &  1.16 &  1.18 &  1.19 &  1.21 &  1.24 &  1.26 &  1.28 &  1.30 &  1.37 &  \textbf{1.42} &  \textbf{1.59} &  \textbf{1.59} &   -   &   -   &   -   &   -   &   -   &   -   &   -  \\
                             &  0.60                                &  1.12 &  1.12 &  1.12 &  1.14 &  1.15 &  1.15 &  1.19 &  1.20 &  1.22 &  1.25 &  1.27 &  1.30 &  1.33 &  1.35 &  1.38 &  1.43 &  1.44 &  \textbf{1.52} &   -   &   -   &   -   &   -   &   -  \\
                             &  0.70                                &  1.12 &  1.12 &  1.12 &  1.13 &  1.14 &  1.15 &  1.17 &  1.18 &  1.21 &  1.25 &  1.26 &  1.28 &  1.35 &  1.40 &  1.41 &  1.43 &  \textbf{1.45} &  1.49 &  \textbf{1.54} &  \textbf{1.74} &  \textbf{1.74} &   -   &   -  \\
                             &  0.80                                &  1.12 &  1.12 &  1.12 &  1.13 &  1.14 &  1.15 &  1.16 &  1.17 &  1.19 &  1.20 &  1.22 &  1.25 &  1.26 &  1.27 &  1.30 &  1.31 &  1.31 &  1.32 &  1.35 &  1.39 &  1.41 &  \textbf{1.41} &  \textbf{1.48}\\
                             &  0.90                                &  1.11 &  1.11 &  1.11 &  1.11 &  1.12 &  1.13 &  1.15 &  1.16 &  1.18 &  1.19 &  1.21 &  1.22 &  1.24 &  1.25 &  1.27 &  1.31 &  1.32 &  1.34 &  1.34 &  1.37 &  1.39 &  1.39 &  1.46\\
                             &  1.00                                &  1.11 &  1.11 &  1.11 &  1.11 &  1.12 &  1.13 &  1.14 &  1.14 &  1.15 &  1.16 &  1.18 &  1.20 &  1.21 &  1.22 &  1.24 &  1.26 &  1.28 &  1.32 &  1.32 &  1.33 &  1.34 &  1.35 &  1.41\\
\midrule
\multirow{10}{*}{0.20}       &  0.10                                &  1.24 &  1.24 &  1.25 &  1.33 &  1.41 &   -   &   -   &   -   &   -   &   -   &   -   &   -   &   -   &   -   &   -   &   -   &   -   &   -   &   -   &   -   &   -   &   -   &   -  \\
                             &  0.20                                &  \textbf{1.51} &  \textbf{1.52} &  \textbf{1.53} &  \textbf{1.59} &  \textbf{1.70} &  \textbf{1.75} &  \textbf{1.81} &  \textbf{1.86} &   -   &   -   &   -   &   -   &   -   &   -   &   -   &   -   &   -   &   -   &   -   &   -   &   -   &   -   &   -  \\
                             &  0.30                                &  1.14 &  1.14 &  1.15 &  1.18 &  1.21 &  1.24 &  1.28 &  1.38 &  \textbf{1.45} &  \textbf{1.51} &   -   &   -   &   -   &   -   &   -   &   -   &   -   &   -   &   -   &   -   &   -   &   -   &   -  \\
                             &  0.40                                &  1.14 &  1.14 &  1.15 &  1.18 &  1.19 &  1.23 &  1.25 &  1.30 &  1.37 &  1.43 &  \textbf{1.49} &  \textbf{1.54} &  \textbf{1.68} &   -   &   -   &   -   &   -   &   -   &   -   &   -   &   -   &   -   &   -  \\
                             &  0.50                                &  1.12 &  1.12 &  1.13 &  1.14 &  1.16 &  1.20 &  1.23 &  1.27 &  1.28 &  1.31 &  1.32 &  1.39 &  1.44 &  \textbf{1.50} &  \textbf{1.70} &  \textbf{1.70} &   -   &   -   &   -   &   -   &   -   &   -   &   -  \\
                             &  0.60                                &  1.12 &  1.12 &  1.12 &  1.14 &  1.16 &  1.17 &  1.19 &  1.21 &  1.24 &  1.25 &  1.30 &  1.30 &  1.31 &  1.36 &  1.39 &  1.47 &  \textbf{1.55} &  \textbf{1.56} &   -   &   -   &   -   &   -   &   -  \\
                             &  0.70                                &  1.11 &  1.11 &  1.11 &  1.12 &  1.14 &  1.16 &  1.17 &  1.18 &  1.20 &  1.23 &  1.24 &  1.26 &  1.28 &  1.32 &  1.37 &  1.38 &  1.40 &  1.43 &  \textbf{1.46} &  \textbf{1.49} &  \textbf{1.49} &   -   &   -  \\
                             &  0.80                                &  1.12 &  1.12 &  1.12 &  1.12 &  1.14 &  1.16 &  1.18 &  1.20 &  1.21 &  1.23 &  1.24 &  1.27 &  1.30 &  1.31 &  1.33 &  1.33 &  1.34 &  1.36 &  1.38 &  1.39 &  1.39 &  1.39 &  \textbf{1.47}\\
                             &  0.90                                &  1.13 &  1.13 &  1.13 &  1.13 &  1.15 &  1.16 &  1.17 &  1.18 &  1.19 &  1.19 &  1.20 &  1.21 &  1.22 &  1.25 &  1.25 &  1.28 &  1.28 &  1.32 &  1.32 &  1.35 &  1.40 &  \textbf{1.43} &  1.44\\
                             &  1.00                                &  1.12 &  1.12 &  1.12 &  1.13 &  1.14 &  1.16 &  1.16 &  1.17 &  1.17 &  1.18 &  1.19 &  1.20 &  1.22 &  1.23 &  1.25 &  1.26 &  1.28 &  1.30 &  1.31 &  1.35 &  1.37 &  1.40 &  1.40\\
\midrule
\multirow{11}{*}{0.25}       &  0.10                                &  1.12 &  1.12 &  1.13 &  1.19 &  1.31 &   -   &   -   &   -   &   -   &   -   &   -   &   -   &   -   &   -   &   -   &   -   &   -   &   -   &   -   &   -   &   -   &   -   &   -  \\
                             &  0.20                                &  1.27 &  1.27 &  1.28 &  1.31 &  1.39 &  1.47 &  \textbf{1.60} &  \textbf{1.69} &   -   &   -   &   -   &   -   &   -   &   -   &   -   &   -   &   -   &   -   &   -   &   -   &   -   &   -   &   -  \\
                             &  0.25                                &  \textbf{1.32} &  \textbf{1.32} &  \textbf{1.33} &  \textbf{1.37} &  \textbf{1.42} &  \textbf{1.47} &  1.53 &  1.59 &  \textbf{1.68} &   -   &   -   &   -   &   -   &   -   &   -   &   -   &   -   &   -   &   -   &   -   &   -   &   -   &   -  \\
                             &  0.30                                &  1.18 &  1.18 &  1.19 &  1.23 &  1.28 &  1.32 &  1.35 &  1.52 &  1.56 &  \textbf{1.61} &   -   &   -   &   -   &   -   &   -   &   -   &   -   &   -   &   -   &   -   &   -   &   -   &   -  \\
                             &  0.40                                &  1.12 &  1.12 &  1.12 &  1.14 &  1.16 &  1.19 &  1.22 &  1.30 &  1.34 &  1.40 &  \textbf{1.44} &  \textbf{1.47} &  \textbf{1.58} &   -   &   -   &   -   &   -   &   -   &   -   &   -   &   -   &   -   &   -  \\
                             &  0.50                                &  1.12 &  1.12 &  1.12 &  1.15 &  1.17 &  1.19 &  1.21 &  1.23 &  1.24 &  1.25 &  1.28 &  1.35 &  1.41 &  \textbf{1.42} &  \textbf{1.56} &  \textbf{1.56} &   -   &   -   &   -   &   -   &   -   &   -   &   -  \\
                             &  0.60                                &  1.10 &  1.10 &  1.10 &  1.11 &  1.13 &  1.16 &  1.18 &  1.21 &  1.22 &  1.23 &  1.27 &  1.33 &  1.36 &  1.38 &  1.44 &  1.49 &  \textbf{1.51} &  \textbf{1.52} &   -   &   -   &   -   &   -   &   -  \\
                             &  0.70                                &  1.10 &  1.11 &  1.11 &  1.11 &  1.13 &  1.15 &  1.17 &  1.19 &  1.20 &  1.21 &  1.23 &  1.25 &  1.27 &  1.28 &  1.31 &  1.31 &  1.38 &  1.40 &  \textbf{1.50} &  \textbf{1.52} &  \textbf{1.52} &   -   &   -  \\
                             &  0.80                                &  1.10 &  1.10 &  1.10 &  1.12 &  1.13 &  1.13 &  1.15 &  1.15 &  1.16 &  1.18 &  1.20 &  1.22 &  1.23 &  1.25 &  1.27 &  1.28 &  1.31 &  1.34 &  1.36 &  1.38 &  1.51 &  \textbf{1.51} &  \textbf{1.56}\\
                             &  0.90                                &  1.10 &  1.10 &  1.10 &  1.11 &  1.13 &  1.14 &  1.15 &  1.16 &  1.17 &  1.18 &  1.19 &  1.20 &  1.21 &  1.21 &  1.23 &  1.24 &  1.25 &  1.26 &  1.28 &  1.30 &  1.30 &  1.34 &  1.34\\
                             &  1.00                                &  1.10 &  1.10 &  1.11 &  1.11 &  1.13 &  1.13 &  1.14 &  1.16 &  1.19 &  1.20 &  1.21 &  1.22 &  1.23 &  1.25 &  1.26 &  1.27 &  1.30 &  1.31 &  1.33 &  1.33 &  1.34 &  1.37 &  1.38\\
\bottomrule
\end{tabular}
}
\end{table*}

\begin{table}[t]
\caption{Certification Offset}
\label{table:acc_corrections}
\centering
\resizebox{.95\linewidth}{!}{%
\begin{tabular}{ccccccccc}
\toprule
\multirow{2}{*}{Data} & Accuracy (Online) &    \multicolumn{3}{c}{$C$ (Online)}     & Accuracy (Offline) &  \multicolumn{2}{c}{$C$ (Offline)} \\
\cmidrule{2-8}                                                           
                      &  {PUE-B}          &  PUE-10  &  PUE-1  &  EMN    &     PUE-10         &    EMN     &     OPS   \\
\midrule
      CIFAR10         &   9.96            &   -0.68  &  -0.12  & -4.72   &      10.32         &    +0.94   &    -0.28  \\
      CIFAR100        &   1.00            &   -0.02  &  -0.10  & -0.68   &       0.97         &    +0.01   &    N/A    \\
      ImageNet        &   1.24            &   +0.15  &  +0.15  & +0.15   &       N/A          &     N/A    &    N/A    \\
\bottomrule
\end{tabular}
}
\end{table}

\begin{table}[t]
\caption{Certified $(q,\eta)$-Learnability based on Different Surrogate Architectures ($\%$, $\sigma=0.25$)}
\label{table:cert_with_diff_arc_0.25}
\centering
\resizebox{.95\linewidth}{!}{%
\begin{tabular}{cccccccccc}
\toprule
\multirow{2}{*}{Architecture}  & \multicolumn{9}{c}{$\eta\times100$}\\
\cmidrule(r){2-10}
                          &  0.1  &  0.5  &  1.0  &  5.0  &  10.0 &  15.0 &  20.0 &  25.0 &  30.0  \\
\midrule
ResNet-18                 & \textbf{10.57} & \textbf{10.58} & \textbf{10.58} & \textbf{10.83} & \textbf{11.10} & \textbf{11.39} & \textbf{11.68} & \textbf{12.22} & \textbf{13.20} \\
ResNet-50                 &  7.86 &  7.87 &  7.87 &  8.00 &  8.15 &  8.48 &  8.91 &  9.95 & 10.97 \\
\bottomrule
\end{tabular}
}
\end{table}

\subsection{Proofs}\label{append:proof}
\lcert*
\begin{proof}
The proof is similar to that of Lemma 2 in the paper of Chiang et al.~\cite{chiang2020detection}.
For simplicity, notate the RHS of the inequality by ${h}_{\overline{q}}(\hat{\theta})$.
Let 
\begin{equation*}
    \lambda(\hat{\theta}) = \E_{\epsilon\sim\N(0,\sigma^2I)} \mathds{1}[A_{\sD}(\hat{\theta}+\epsilon)\leq {h}_{\overline{q}}(\hat{\theta})],
\end{equation*}
there is $\lambda(\hat{\theta}) \in [0,1]$.
It is obvious that
\begin{equation*}
    \lambda(\hat{\theta}) = \Pr[A_{\sD}(\hat{\theta}+\epsilon)\leq {h}_{\overline{q}}(\hat{\theta})].
\end{equation*}
The following function
\begin{equation*}
\begin{aligned}
     \Lambda(\hat{\theta}) &= \sigma \Phi^{-1}(\lambda(\hat{\theta})) \\
     &= \sigma \Phi^{-1}(\Pr[A_{\sD}(\hat{\theta}+\epsilon)\leq {h}_{\overline{q}}(\hat{\theta})])
\end{aligned}
\end{equation*}
is 1-Lipschiz due to Lemma 2 of Salman et al.~\cite{salman2019provably}.
Therefore,
\begin{equation}\label{eq:ineq1}
\begin{aligned}
         & \Phi^{-1}(\Pr[A_{\sD}(\hat{\theta}+\upsilon+\epsilon) \leq {h}_{\overline{q}}(\hat{\theta})]) \\
    \geq & \Phi^{-1}(\Pr[A_{\sD}(\hat{\theta}+\epsilon) \leq {h}_{\overline{q}}(\hat{\theta})]) - \frac{\|\upsilon\|}{\sigma}.
\end{aligned}
\end{equation}
If and only if when $\|\upsilon\| \leq \eta$, there is
\begin{equation}\label{eq:ineq2}
\begin{aligned}
    & \Phi^{-1}(\Pr[A_{\sD}(\hat{\theta}+\epsilon)\leq {h}_{\overline{q}}(\hat{\theta})]) - \frac{\|\upsilon\|}{\sigma} \\
    \geq & \Phi^{-1}(\Pr[A_{\sD}(\hat{\theta}+\epsilon)\leq {h}_{\overline{q}}(\hat{\theta})]) - \frac{\eta}{\sigma} \\
       = & \Phi^{-1}(\overline{q}) - \frac{\eta}{\sigma} \\
       = & \Phi^{-1}({q}).
\end{aligned}
\end{equation}
Since the inverse CDF is monotonically increasing, combining Inequality~\ref{eq:ineq1} and Inequality~\ref{eq:ineq2} yields
\begin{equation*}
    \Pr[A_{\sD}(\hat{\theta}+\upsilon+\epsilon)\leq {h}_{\overline{q}}(\hat{\theta})] \geq q.
\end{equation*}
Note that, by definition, 
\begin{equation*}
{h}_{{q}}(\hat{\theta}+\upsilon) = \inf\ \{t\ |\ \Pr[A_{\sD}(\hat{\theta}+\upsilon+\epsilon) \leq t] \geq {q}\},
\end{equation*}
Then
\begin{equation*}
\begin{aligned}
    {h}_{{q}}(\hat{\theta}+\upsilon) &\leq {h}_{\overline{q}}(\hat{\theta}) \\
 \iff   {h}_{{q}}(\hat{\theta}+\upsilon) &\leq \inf\ \{t\ |\ \Pr[A_{\sD}(\hat{\theta}+\epsilon) \leq t] \geq \overline{q}\}.
\end{aligned}
\end{equation*}
The proof is concluded.
\end{proof}

\generr*
\begin{proof}
Given the input space $\gX:=\{ x\in\R^d\ |\ \sum_{i=1}^{d} x^2_i \leq \tau \}$, a classifier $f_{\hat{\theta}}:\R^d\rightarrow \R^k$, an accuracy $A_{\gD}(\cdot)$ over data distribution $\gD$, and an empirical accuracy $A_{\hat{\sD}}(\cdot)$ calculated from a test dataset $\hat{\sD}$ of size $N$.
Notice that the generalization error of $f_{\hat{\theta}}$ is
\begin{equation}
    err_{\gD}(\hat{\theta}) = \E_{(x,y)\sim\gD} \mathds{1}[f_{\hat{\theta}}^{(y)}(x) \leq \max_{i:i\neq y} f_{\hat{\theta}}^{(i)}(x)],
\end{equation}
where $f_{\hat{\theta}}^{(i)}(x)$ is the $i$-th classification score of $f_{\hat{\theta}}(x)$. 
Similarly, the empirical error on $\hat{\sD}$ is
\begin{equation}
    err_{\hat{\sD}}(\hat{\theta}) = \frac{1}{N} \sum_{j=1}^{N}\mathds{1}[f_{\hat{\theta}}^{(y_j)}(x_j) \leq \max_{i:i\neq y_j} f_{\hat{\theta}}^{(i)}(x_j)].
\end{equation}
When the classifier's weights are randomly perturbed by $\epsilon\sim\pi(0)$, according to Langford \& Shawe-Taylor, and McAllester~\cite{mcallester2003simplified}, for any $\alpha > 0$:
\begin{equation}
    \resizebox{1\columnwidth}{!} 
    {%
    $
    \begin{aligned}
        & \Pr\left( \E_{\epsilon}[err_{\gD}(\hat{\theta}+\epsilon)] \leq \E_{\epsilon}[err_{\hat{\sD}}(\hat{\theta}+\epsilon)] + \sqrt{\frac{D_{KL}(\pi(\hat{\theta}+\epsilon)||P) + \log\frac{N}{\alpha}}{2(N-1)}}\right) \\
    & \geq 1 - \alpha,
    \end{aligned}
    $
    }
\end{equation}
where $P$ is a prior distribution independent from training data, $\pi(\hat{\theta}+\upsilon)$ is the distribution of $\hat{\theta}+\upsilon$, and $D_{KL}$ measures the Kullback–Leibler (KL) Divergence.
Therefore, with probability at least $1-\alpha$,
\begin{equation}\label{eq:mc_bound}
    \begin{aligned}
              & \E_{\epsilon} [{err}_{\gD}(\hat{\theta}+\epsilon)] \\
         \leq & \E_{\epsilon} [err_{\hat{\sD}}(\hat{\theta}+\epsilon)] + \sqrt{\frac{D_{KL}(\pi(\hat{\theta}+\epsilon) || P) + \log \frac{N}{\alpha}}{2(N-1)}}.
    \end{aligned}
\end{equation}
Let $P$ be $\N(0, \sigma^2I)$, and by knowing $\pi(\hat{\theta}+\epsilon) = \N(\hat{\theta}, \sigma^2I)$, the KL Divergence can be computed as
\begin{equation}\label{eq:kld_upper_bound}
    \begin{aligned}
        D_{KL}(\pi(\hat{\theta}+\epsilon) || P) &= \frac{\|\hat{\theta}\|^2}{2\sigma^2} + \frac{|\hat{\theta}|}{2} \left[ \frac{\sigma^2}{\sigma^2} + \log (\frac{\sigma^2}{\sigma^2}) - 1 \right] \\
        & = \frac{\|\hat{\theta}\|^2}{2\sigma^2},
    \end{aligned}
\end{equation}
where $|\hat{\theta}|$ is the size of $\hat{\theta}$.
Plugging Equation~\ref{eq:kld_upper_bound} and 
\begin{equation*}
\begin{aligned}
    & A_{\gD}(\hat{\theta}+\epsilon) = 1 - err_{\gD}(\hat{\theta}+\epsilon) \\
    & A_{\hat{\sD}}(\hat{\theta}+\epsilon) = 1 - err_{\hat{\sD}}(\hat{\theta}+\epsilon) 
\end{aligned}
\end{equation*}
into Inequality~\ref{eq:mc_bound} concludes the proof.
\end{proof}

\subsection{More Experimental Settings and Results}\label{append:ex_setup}
\noindent\textbf{Additional details of hyper-parameters.~}
For all experiments, we set the value $\xi=8/255$ for CIFAR10 and CIFAR100. 
On ImageNet, we set $\xi=16/255$.
The training batch size is $128$ for all classifiers except DenseNet121, where it is set to $96$.
The step size of gradient descent during optimizing the noise $\delta$ is set to $\xi/10$.
We set $U_{train}=100$ for all training with random weight perturbations.
In all experiments of crafting PUE/PUE-B/EMN, the value of $M$ is set to $10$ for CIFAR10, $20$ for CIFAR100, and $100$ for ImageNet.
The stop criteria for the PUE/PUE-B/EMN optimization is that the validation error drops below $10\%$.
We use $20\%$ of the training set to optimize the PAP noise and validate the error of the online surrogate. 
The online surrogate is trained on the full training set with a poisoning rate of $1.0$.
In offline surrogate training, we use the fully poisoned training datasets for training and validation.
The offline surrogate training is stopped when the training error is less than $10\%$ and the surrogate has gone through at least one epoch of training with random weight perturbations.

\noindent\textbf{Comparison of $(q, \eta)$-Learnability.~}
Given a surrogate with parameters $\hat{\theta}$ and a corresponding $\hat{\Theta}$, we are interested in the accuracy gain $l_{(q,\eta)}(\hat{\Theta}; \sD_s\oplus\delta) - A_{\sD}(\hat{\theta})$ resulting from perturbing $\hat{\theta}$. 
When comparing two surrogates trained by different methods, the one inducing a higher accuracy gain through certification can locate a certified parameter set on which a tighter certification can be made. 
Regarding surrogates trained on different PAP noises using the same training methods, their accuracy gains reflect the robustness of the noises towards recovery attacks or uncertainties in parameters. 
Therefore, we adjust the learnability scores in the comparisons to better demonstrate the differences in accuracy gains.
In the online setting, let the accuracy of the PUE-B surrogate be $A_{\sD}(\theta_{PUE-B})$, we offset the certified learnability of $X$ $\in\{$PUE-10, PUE-1, EMN$\}$ by adding a constant $C = A_{\sD}(\theta_{PUE-B}) - A_{\sD}(\theta_{X})$.
In the offline setting, we use PUE-10 as the baseline and accordingly offset certification results from EMN and OPS.
The offsets are listed in Table~\ref{table:acc_corrections}.

\noindent\textbf{Ablation study of certification.~}
The ablation study on CIFAR10 and CIFAR100 is presented in Table~\ref{table:sigma_learnability_tradoff_cifar10} and Table~\ref{table:sigma_learnability_tradoff_cifar100}, respectively.
We can observe similar trends in the results on the two datasets.
Generally, when the train-time noise matches the parametric smoothing noise, higher $(q, \eta)$-Learnability can be certified.
A train-time noise level of $0.25$ can obtain higher $(q, \eta)$-Learnability certified at farther $\eta$.

\noindent\textbf{Cross-architecture certification.~}
We train offline surrogates based on ResNet-18 and ResNet-50 architectures on the PUE version of CIFAR10, respectively.
The CIFAR10 PUEs are crafted based on ResNet-18.
Therefore, the ResNet-50 surrogate examines a cross-architecture certification performance.
We compare the certification results from these two architectures in Table~\ref{table:cert_with_diff_arc_0.25}.
We set $\sigma=0.25$ in the certification.
Similar to the previous comparisons, we use ResNet-18 as the baseline and apply offsets (-2.26) to the $(q, \eta)$-Learnability scores certified by ResNet-50.

The results show that ResNet-18 certifies tighter $(q, \eta)$-Learnability scores.
The results suggest that the surrogate used in certification should match the surrogate used for generating UEs.
Note that ResNet-50 has more parameters, therefore the certifications are carried out on parameter spaces with different dimensionality.
On top of that, we also notice that ResNet-50 is less capable of finding better classifiers through parametric randomization.
Better surrogate training methods could be proposed in the future to efficiently augment surrogates with more parameters.

\begin{table}[t]
\caption{Empirical Top-1 Clean Testing Accuracy ($\%$) of Classifiers Trained on UEs}
\label{table:emp_acc}
\centering
\resizebox{.65\linewidth}{!}{%
\begin{tabular}{cccccc}
\toprule
Data &      Model   &  PUE  &   EMN  &  OPS  &  Clean \\
\midrule
\multirow{3}{*}{CIFAR10} 
     & ResNet-18    & \textbf{10.62} &  11.89 & 12.95 &  94.90 \\
     & ResNet-50    & \textbf{10.00} &  12.37 & 10.96 &  94.17 \\
     & DenseNet-121 & \textbf{10.31} &  11.86 & 13.80 &  95.16 \\
\midrule
\multirow{3}{*}{CIFAR100} 
     & ResNet-18    &  2.62 &   2.49 &  \textbf{1.57} &  70.78 \\
     & ResNet-50    &  2.03 &   \textbf{1.61} &  1.78 &  70.89 \\
     & DenseNet-121 &  3.01 &   2.56 &  \textbf{2.32} &  95.16 \\
\midrule
\multirow{3}{*}{ImageNet} 
     & ResNet-18    &  \textbf{4.56} &   4.84 & 22.32 &  79.86 \\
     & ResNet-50    &  \textbf{3.34} &   4.00 & 26.62 &  82.36 \\
     & DenseNet-121 &  \textbf{4.60} &   6.23 & 22.60 &  84.04 \\
\bottomrule
\end{tabular}
}
\end{table}

\begin{table}[htb]
\caption{Impact of Training Strategies}
\label{table:emp_acc_with_stratagies}
\centering
\resizebox{.55\linewidth}{!}{%
\begin{tabular}{ccccc}
\toprule
\multirow{2}{*}{Strategy}           & \multicolumn{4}{c}{Data} \\
\cmidrule{2-5}
                                    &  PUE  &  EMN  &  OPS  &  Clean \\
\midrule
MixUp                               & 24.66 & 17.79 & 32.99 &  95.21 \\
CutOut                              & 12.05 & 14.42 & 57.69 &  95.18 \\
Fast Autoaugment                    & 35.33 & 22.71 & 65.61 &  95.14 \\
$\ell_{\infty}$ AT                  & 85.03 & 85.06 & 10.45 &  84.97 \\
\bottomrule
\end{tabular}
}
\end{table}

\noindent\textbf{Computational overhead.~}
We also tested the effective noise level for training the surrogate and PUEs. 
We found that the surrogate and PUEs could converge rapidly on a Gaussian noise whose $\sigma' \leq 0.25$. 
However, when the noise further increases to $\sigma' \geq 0.3$, the training fails to converge in a limited time.
We record the GPU hours of training PUEs and surrogates under different noise levels, on Nvidia Tesla P100 GPUs (Table~\ref{table:online_train_cost} and Table~\ref{table:offline_train_cost}).
The training cost increases with the scale of the train-time noise. 
Each training run of PUEs and an online surrogate takes around $40$ GPU hours when selecting $0.25$ as the scale of the train-time noise.
Training an offline surrogate with a noise scale of $0.25$ takes around 2.16 GPU hours.
The training cost is within an acceptable range.

\begin{table}[t]
\caption{Training Cost (GPU hour) for PUEs and Online Surrogates}
\label{table:online_train_cost}
\centering
\resizebox{.55\linewidth}{!}{%
\begin{tabular}{cccccc}
\toprule
\multirow{2}{*}{Dataset}  & \multicolumn{5}{c}{Noise Level}\\
\cmidrule(r){2-6}
                          &  0.05 & 0.10 & 0.15 & 0.20  & 0.25  \\
\midrule
CIFAR10                   &  0.24 & 1.20 & 1.20 & 19.68 & 39.60 \\
CIFAR100                  &  0.24 & 2.16 & 2.16 & 12.96 & 38.88 \\
ImageNet                  &  0.40 & 3.84 & 5.04 & 23.28 & 45.90 \\
\bottomrule
\end{tabular}
}
\end{table}

\begin{table}[t]
\caption{Training Cost (GPU hour) for Offline Surrogates}
\label{table:offline_train_cost}
\centering
\resizebox{.4\linewidth}{!}{%
\begin{tabular}{cccc}
\toprule
\multirow{2}{*}{Dataset}  & \multicolumn{3}{c}{PAP Noise}\\
\cmidrule(r){2-4}
                          &  PUE  &  EMN  &  OPS  \\
\midrule
CIFAR10                   &  2.16 &  2.16 &  3.60 \\
CIFAR100                  &  2.16 &  2.16 &  N/A  \\
\bottomrule
\end{tabular}
}
\end{table}

\subsection{Empirical Evaluations}\label{append:exp:emprirical_results}
In this section, we investigate the empirical utility of UEs.
We conduct empirical evaluations of the PUEs by comparing the clean test accuracy of classifiers trained on PUE, EMN~\cite{huangunlearnable2021}, and OPS~\cite{wu2023one}.
We also employ train-time strategies in our experiments to show the robustness of PUE against such strategies.
However, as discussed in this paper, we want to emphasize that empirical results could be less instructive due to the uncertainty brought by the randomness in training.

\noindent\textbf{Vanilla training.~}
In this part, we measure the empirical Top-1 accuracy of classifiers trained on PUE-10, EMN, and OPS, respectively.
There is no particular training strategy incorporated at this stage and we stick to the training details mentioned at the beginning of the Experiments section.
We use the default settings from the corresponding papers of EMN and OPS in the comparison, except for ImageNet OPS.
We use Random Resized Crop instead of Center Crop to process ImageNet data to $224\times224$ for PUE, EMN, and OPS.
The results are presented in Table~\ref{table:emp_acc}.
PUE outperforms all competitors on CIFAR10 and ImageNet, and it is on par with EMN and OPS on CIFAR100.
We observe that OPS performs poorly on ImageNet.
The possible reason is that Random Resized Crop diversifies the pixel distribution in each class, rendering the task of finding the most impactful pixel arduous for OPS.

\noindent\textbf{Training with strategies.~}
Training techniques such as data augmentation and adversarial training are prevalently placed into model training in real-world cases. 
Though the certified learnability considers only the parameters of the trained model and ignores the training techniques, it is also of interest to test how PUEs react to these strategies when the trained classifier is out of the certifiable parameter set. 
In this part, we apply different techniques in the training and compare the robustness of PUE, EMN and OPS towards these techniques.  
Specifically, we tested PUEs along with its competitors on adversarial training (AT)~\cite{madry2017towards}, Mixup~\cite{zhang2018mixup}, Cutout~\cite{devries2017improved}, and Fast Autoaugment~\cite{lim2019fast}.
We use an $\ell_{\infty}$ norm of $8/255$ for PGD in the adversarial training.
The results are in Table~\ref{table:emp_acc_with_stratagies}.
PUE outperforms both EMN and OPS when training using CutOut, and better survives from MixUp and Fast Autoaugment than OPS.
In contrast, OPS is barely affected by adversarial training while PUE and EMN can be invalidated.  

\end{document}